\documentclass{article}

\usepackage{microtype}
\usepackage{graphicx}
\usepackage{subcaption}
\usepackage{booktabs} 
\usepackage{multirow}
\usepackage{tocbibind}
\usepackage{titletoc}
\usepackage{hyperref}

\usepackage[preprint]{icml2026}

\usepackage{amsmath}
\usepackage{amssymb}
\usepackage{mathtools}
\usepackage{amsthm}
\usepackage[table]{xcolor}
\usepackage{array}
\usepackage{booktabs}
\definecolor{darkgreen}{RGB}{0,100,0}
\usepackage{mdframed}
\usepackage[most]{tcolorbox}
\usepackage[capitalize,noabbrev]{cleveref}
\usepackage{xcolor}
\usepackage{colortbl}
\definecolor{lightgray}{gray}{0.92}
\definecolor{lightblue}{RGB}{230,240,255}

\theoremstyle{plain}
\newtheorem{theorem}{Theorem}[section]

\theoremstyle{definition}

\newtheorem{assumption}[theorem]{Assumption}
\theoremstyle{remark}

\usepackage[textsize=tiny]{todonotes}

\icmltitlerunning{Subject-Centric Progressive Visual Token Reduction for VLMs}

\begin{document}

\twocolumn[
  \icmltitle{Focus-then-Context: Subject-Centric Progressive Visual Token Reduction for Vision-Language Models}



  \icmlsetsymbol{equal}{*}

  \begin{icmlauthorlist}
    \icmlauthor{Yulin Zhao}{sch}
    \icmlauthor{Zheng Zhang}{sch,slai}
  \end{icmlauthorlist}

  \icmlaffiliation{sch}{Harbin Institute of Technology, Shenzhen, China}
  \icmlaffiliation{slai}{ShenZhen Loop Area Institute}

  \icmlcorrespondingauthor{Zheng Zhang}{darrenzz219@gmail.com}

  \icmlkeywords{Machine Learning, ICML}

  \vskip 0.3in
]



\printAffiliationsAndNotice{}  

\begin{abstract}
Vision-Language Models (VLMs) face a bottleneck of prohibitive computational costs arising from massive visual token sequences during inference. Existing vision token reduction methods alleviate this burden, but they unintentionally preserve the isolated visual subject strictly aligned with the user's query, which fails to substantially explore salient subjects and their contextual relationships. In this paper, we propose \textbf{SPpruner}, a subject-centric progressive reduction paradigm that emulates the \textit{Focus-then-Context} mechanism of the human visual perception system. Specifically, we first construct a focus identification module to explicitly model the interplay between visual saliency and semantic relevance. Herein, it can excavate the comprehensive visual subject spectrum to ensure a high-fidelity representation of visual input. Subsequently, a context-aware structural scanning module is developed to aggregate contextual cues from neighboring regions. As such, it can effectively restore global relational dependencies to uphold the structural integrity of the preserved subjects. Extensive experiments demonstrate that our paradigm consistently outperforms SOTA methods, achieving up to a 2.53× speedup with only 22.2\% of visual tokens retained in Qwen2.5-VL and a 67\% FLOPs reduction on LLaVA with a negligible 0.6\% accuracy drop.
\vspace{-2ex}
\end{abstract}

\begin{figure*}[t]
  \begin{center}
  \centerline{\includegraphics[width=0.99\textwidth]{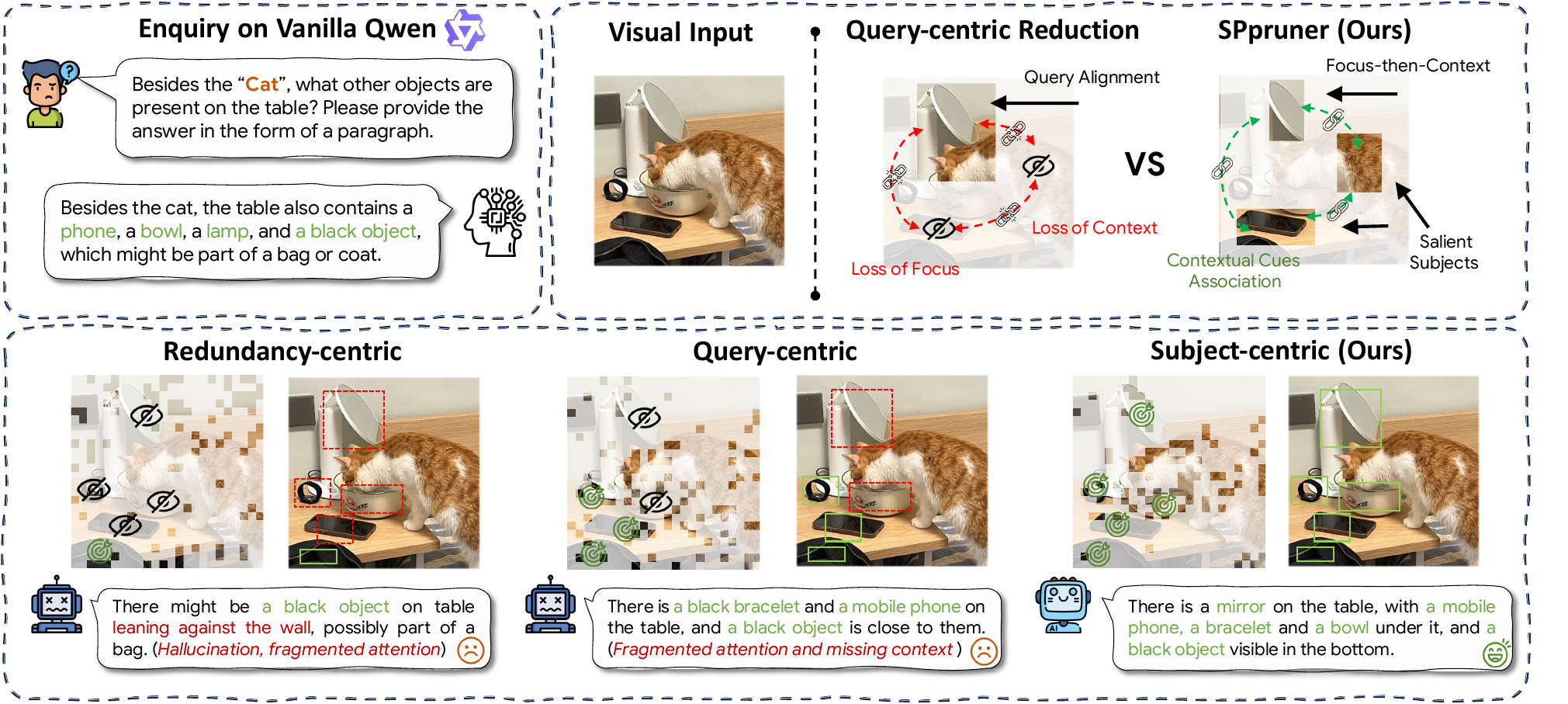}}    
    \caption{\textbf{Comparison between existing paradigms and SPpruner}. Query-centric paradigms tend to retain tokens strictly aligned with explicit text queries and inadvertently discard other salient subjects (e.g., \textit{mirror}) that are critical for answering comprehensive questions. This loss of other salient subjects serves visual understanding. In contrast, SPpruner preserves a broad spectrum of visual subjects and their structural context, facilitating comprehensive visual reasoning. \textcolor{red}{Red} and \textcolor{darkgreen}{green} indicate missed and captured subjects, respectively.}
    \label{fig:motivation}
  \end{center}\vspace{-4ex}
\end{figure*}

\section{Introduction}
Vision-Language Models have demonstrated remarkable success across a broad range of vision tasks~\citep{vqamllms, reasoningmllms}. The core component, the vision encoder, enables the model to map images or videos into a long sequence of visual tokens, enhancing its understanding of visual content. However, this comes at a high computational cost due to the massive length of visual token sequences, which becomes particularly burdensome when processing long-sequence videos~\citep{videounderstanding} or high-resolution images~\citep{highresolutionimage}. This kind of inference paradigm severely restricts their efficiency, making practical deployment difficult in resource-constrained environments.

To mitigate this challenge, Token Reduction (TR) has been proposed to accelerate the inference procedure. Existing TR paradigms generally fall into two categories: (1) \textit{redundancy-centric reduction}, such as ToMe~\citep{tome}, Visionzip~\citep{visionzip}, and FastV~\citep{fastv}, which merge or prune tokens based on statistical redundancy or spatial similarity, and (2) \textit{query-centric reduction}, exemplified by CDpruner\citep{cdpruner} and SparseVLM~\citep{sparsevlm}, which selectively retains tokens solely in response to their relevance to user queries.

Despite the efficiency gains, these methods exhibit a fundamental limitation: they treat TR as a statistical filtering task rather than a subject perception process. As illustrated in Figure~\ref{fig:motivation}, existing methods often fail to adequately preserve tokens that cover all salient visual subjects and their contextual interactions. For instance, query-centric paradigms tend to retain tokens strictly aligned with explicit text queries and inadvertently discard other salient subjects (e.g., \textit{mirror}) that are critical for answering comprehensive questions. This aggressive reduction is detrimental because comprehensive scene understanding often requires contextual information beyond the explicit query terms. Our analysis identifies two primary causes for this shortfall, both stemming from information loss during the token reduction procedure:

(1) \textbf{Loss of Focus:} Existing paradigms relying solely on query-based attention typically capture only the most isolated relevant regions referred to in the given text query. They inadvertently suppress other visually salient regions that are critical for global perception, severely limiting the model's performance in complex multi-subject scenarios.

(2) \textbf{Loss of Context:} By treating token reduction as a local filtering task, current paradigms neglect the structural information of the initial visual input. This results in a set of disjoint visual tokens lacking the necessary environmental context, which makes it difficult to deduce causal or spatial relationships among the retained visual subjects.

In this work, we propose a novel subject-centric progressive token reduction paradigm, named \textbf{SPpruner}. It is designed to address the limitations of prior approaches by incorporating the \textit{Focus-then-Context} principle of the Human Visual Perception System (HVPS) ~\cite{hvs1,hvs2}, where primary visual subjects are identified before analyzing their surrounding contextual relationships. Unlike traditional approaches that rely solely on attention or relevance, SPpruner takes into account both intrinsic visual saliency and semantic relevance, as illustrated in Figure~\ref{fig:framework}(a), allowing it to precisely mine a broader spectrum of salient subjects and ensuring the retention of diverse visual components beyond mere query alignment. Subsequently, as depicted in Figure~\ref{fig:framework}(b), we employ a context-aware structural scanning module to aggregate contextual cues from neighboring regions associated with preserved subjects, empowering the model to effectively uphold structural fidelity and sustain holistic image understanding, even under aggressively high token compression rates. 

In summary, our main contributions are as follows: \vspace{-2ex}
\begin{itemize}
    \item We propose SPpruner, a novel subject-centric progressive token reduction paradigm that simulates HVPS to precisely capture a broader spectrum of salient visual subjects along with their contextual associations, reconciling the conflict between efficient inference and comprehensive visual understanding. \vspace{-1ex}
    \item We construct a focus identification module to enhance the model's capability in capturing diverse visual subjects, and develop a context-aware structural scanning module to aggregate contextual cues and restore global relationships associated with these subjects. \vspace{-1ex}
    \item SPpruner is a plug-and-play solution requiring no additional training or fine-tuning. Extensive experiments on 22 benchmarks demonstrate that SPpruner incurs merely a 2.3\% performance loss at 88.9\% reduction on LLaVA-Next and achieves a 62\% reduction in GFLOPS with only a 5\% performance drop, validating its potential for efficient real-world deployment. \vspace{-1ex}
\end{itemize}

\begin{figure*}[t]
\begin{center}
\includegraphics[width=0.99\textwidth, height=10cm, keepaspectratio]{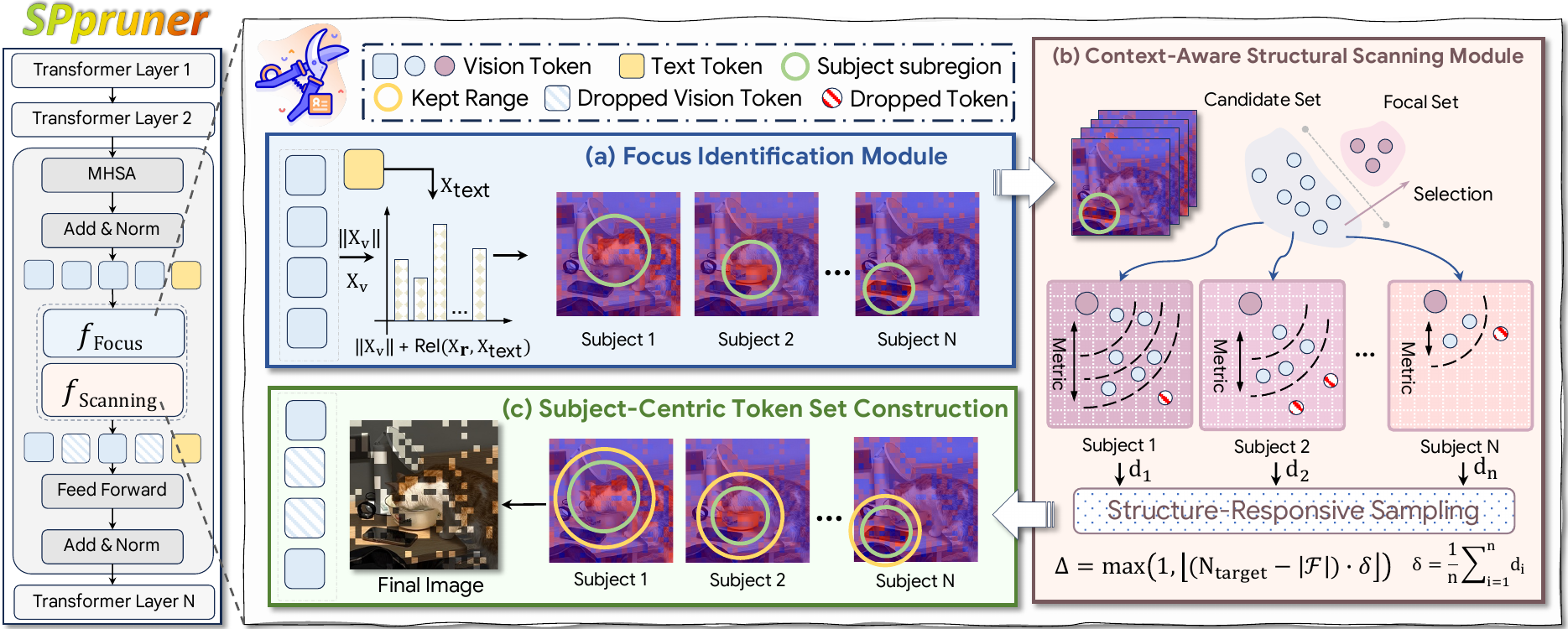}
\end{center}
\caption{\textbf{Framework of SPpruner.} (a) The focus identification module first identifies salient visual subjects by combining intrinsic visual saliency with semantic relevance to the text query. (b) The context-aware structural scanning module then employs a structure-responsive sampling mechanism to select contextual tokens associated with these identified subjects, ensuring structural integrity. (c) Construct the final retained visual token set $\tilde{\mathbf{X}}_{\mathrm{v}}$ by combining the focal subject tokens with their associated contextual tokens.}
\label{fig:framework}\vspace{-2ex}
\end{figure*}

\vspace{-2ex}
\section{Related Work}
\subsection{Vision-Language Models} The integration of large language models with high-capacity vision encoders has catalyzed a paradigm shift in multimodal reasoning~\citep{vqamllms, reasoningmllms}. Pioneering architectures such as LLaVA~\citep{llava} and MiniGPT-4~\cite{minigpt4} align pre-trained visual representations with LLMs via visual instruction tuning, establishing robust baselines for visual understanding. To capture fine-grained details and support long-context inputs, recent advancements like Qwen-VL~\citep{qwen2vl}, mPLUG-Owl2~\cite{mplugowl2}, LLaVA-Next~\citep{llava-next}, LLaVA-Onevision~\citep{llavaonevision}, and InternVL~\cite{internvl} have aggressively scaled up input resolutions and vision encoder capacities. While yielding exceptional performance, these models inevitably produce massive visual token sequences, leading to high computational overhead. This significant computational cost severely hinders their practical deployment in resource-constrained environments, necessitating effective token reduction strategies.

\subsection{Visual Token Reduction}
To mitigate the computational bottleneck, training-free token reduction (TR) has emerged as a promising direction. Existing approaches can be broadly categorized into two paradigms based on their reduction criteria: \textit{Redundancy-Centric Reduction} and \textit{Query-Centric Reduction}.

\textbf{Redundancy-Centric Reduction.} This category of methods operates on the premise that visual data contains inherent spatial or statistical redundancy. They aim to merge or prune tokens by leveraging intrinsic visual properties, treating token reduction as a statistical compression task. For example, ToMe~\citep{tome} uses bipartite matching or clustering based on visual similarity to merge spatially redundant tokens. Similarly, VisionZip~\citep{visionzip} and FastV~\citep{fastv} filter tokens based on vision attention, assuming that tokens with low attention weights in early layers are dispensable. Moreover, DivPrune~\citep{divprune} formulates token reduction as a maximization task, relying on the similarity among visual tokens for reduction. Although these methods effectively reduce the sequence length, they suffer from semantic agnosticism. By relying solely on statistical metrics, they risk merging semantically distinct concepts, leading to a significant degradation in fine-grained and holistic visual perception.

\textbf{Query-Centric Reduction.} Recognizing the importance of user query, recent works have shifted towards query-centric reduction. Methods such as SparseVLM~\citep{sparsevlm}, DART~\citep{dart}, and PACT~\citep{pact} explicitly utilize the cross-attention scores between visual tokens and the textual query to steer token selection. While this improves alignment with the user's query, these approaches exhibit a fundamental limitation: they aggressively discard visual subjects that are not explicitly mentioned in the text but are critical for contextual reasoning, such as contextual cues among the visual subjects. This results in the retained visual tokens failing to maintain consistency with the visual input, severing the global dependencies required for holistic image understanding. This issue will be further addressed in the subsequent sections through both theoretical discussion and empirical validation of our novel token reduction paradigm.

\vspace{-1ex}
\section{Methodology}
\subsection{Preliminaries}
\noindent \textbf{Architecture of Vision-Language Models.}
In this work, we employ a standard vision-language model architecture comprising a vision encoder, a cross-modal projector, and a large language model backbone. The projector maps raw visual inputs into a sequence of continuous visual embeddings. Formally, the input sequence concatenates system instructions $\mathbf{X_s}$, visual tokens $\mathbf{X_v}$, and the user query $\mathbf{X_q}$. During generation phase, LLM decodes the output token $\mathrm{t}$ sequentially based on $\mathbf{X}_{\text{input}}=\{\mathbf{X_s};\mathbf{X_v};\mathbf{X_q}\}$, which can be formulated as: $\mathrm{t_i}=f(\mathbf{X}_{\text{input}},\mathrm{t_{0}},\mathrm{t_{1}},\mathrm{t_{2}},\cdots,\mathrm{t_{i-1}})$.

\noindent \textbf{Visual Token Reduction and Dilemma.}
The objective of visual token reduction is to alleviate the computational overhead during inference. In contrast to the semantically dense text tokens $ \mathbf{X_{\text{text}}} = \left\{ \mathbf{X_s}, \mathbf{X_q} \right\}$, visual tokens exhibit inherent spatial redundancy, rendering them prime candidates for sparsification. We formulate the token reduction problem as follows: Let $\mathbf{X_v}$ be the original set of visual tokens with cardinality $\mathrm{N}$. We seek to identify a sparse subset $\tilde{\mathbf{X}}_\mathbf{v}$ of size $\tilde{\mathrm{N}}$ (where $\tilde{\mathrm{N}} < \mathrm{N}$) that maximally preserves the information required for reasoning. Mathematically, this transforms into finding an optimal subset that minimizes the divergence $\mathcal{L}$ between the predictions of the original and compressed inputs, subject to a budget constraint:
\begin{equation}
\tilde{\mathbf{X}}_\mathrm{v} =
\mathop{\mathrm{argmin}}_{
\tilde{\mathbf{X}}_\mathrm{v} \subset \mathbf{X}_\mathrm{v},\
|\tilde{\mathbf{X}}_\mathbf{v}| = \tilde{\mathrm{N}}
}
\mathcal{L} \Bigl( \mathsf{P}(\mathbf{X}_{\text{text}}; \mathbf{X}_\mathrm{v}),
\tilde{\mathsf{P}}(\mathbf{X}_{\text{text}}; \tilde{\mathbf{X}}_\mathrm{v}) \Bigr).
\label{eq:loss_token_reduction}
\end{equation}
\noindent While existing TR tasks typically leverage unimodal ($\mathbf{X}_\mathrm{v}$) or cross-modal ($\mathbf{X}_\mathrm{text},\mathbf{X}_\mathrm{v}$) information to guide visual token retention, this strict alignment paradigm inevitably leads to the loss of implicit subjects and contextual cues, compromising model's holistic understanding of the entire image.
\subsection{Framework of SPpruner}
In this section, we present a subject-centric progressive token reduction paradigm that excavates a broader salient subject spectrum and more informative global relational dependencies without incurring additional training and fine-tuning, reconciling the conflict between efficient inference and comprehensive visual understanding.
\subsection{Focus Identification Module (FIM)}
The first objective is to mitigate the \textit{Loss of Focus} by identifying visual subjects that are visually salient and semantically relevant. Unlike prior query-guided methods that only
retain tokens strictly aligned with user’s query, we start by establishing a focal set $\mathcal{F}$ to excavate the comprehensive visual subject spectrum to ensure a high-fidelity representation of visual input. Herein, we define a composite scoring function $\mathcal{S}(\mathbf{x}_{\mathrm{i}})$ for each visual token $\mathbf{x}_{\mathrm{i}} \in \mathbf{X}_{\mathrm{v}}$ by synergizing its intrinsic visual magnitude with its semantic relevance:
\begin{equation}
\begin{aligned}
\mathcal{S}(\mathbf{x}_{\mathrm{i}}) &= \Phi\left(\|\mathbf{x}_{\mathrm{i}}\|_1\right) + \Phi\left(\mathcal{R}(\mathbf{x}_{\mathrm{i}} \mid \mathbf{X}_{\mathrm{q}})\right), \\
\mathcal{R}(\mathbf{x}_{\mathrm{i}} \mid \mathbf{X}_{\mathrm{q}}) &= \mathbb{E}_{\mathbf{x}_{\mathrm{q,j}} \in \mathbf{X}_{\mathrm{q}}} \left[ \frac{\mathbf{x}_{\mathrm{i}} \cdot \mathbf{x}_{\mathrm{q,j}}^{\top}}{\|\mathbf{x}_{\mathrm{i}}\|_2 \|\mathbf{x}_{\mathrm{q,j}}\|_2} \right],
\end{aligned}
\label{eq:focus_score}
\end{equation}
\noindent where $\Phi$ denotes a min-max normalization function aligning the metrics to a unified scale. Here, the first term serves as a robust proxy for intrinsic visual saliency, while the latter captures the semantic alignment. By establishing the focal set $\mathcal{F}$ via the top-$\mathrm{K}$ tokens from $\mathcal{S}(\mathbf{x}_{\mathrm{i}})$, SPpruner identifies a broad spectrum of salient subjects, spanning from explicit query targets to implicit but salient subjects.

\begin{table*}[t]
\centering
\caption{Performance comparison on LLaVA-1.5-7B under varying token budgets.}
\label{tab:llava-1.57b}
\renewcommand{\arraystretch}{1}
\resizebox{0.99\textwidth}{!}{
\begin{tabular}{l|ccccccccc|c}
\toprule
Methods
& MME & MMB(EN) & MMB(CN)
& POPE & VizWiz & GQA & VQAv2 & SQA$^\text{IMG}$ & TextVQA & Rel.\\
\midrule

\rowcolor{lightgray}
\multicolumn{11}{c}{\textit{Upper Bound: 576 Tokens (100\%)}} \\

\textcolor{gray}{Vanilla}
& \textcolor{gray}{1862} & \textcolor{gray}{64.6} & \textcolor{gray}{58.1}
& \textcolor{gray}{85.9} & \textcolor{gray}{50.1} & \textcolor{gray}{61.9} & \textcolor{gray}{78.5}
& \textcolor{gray}{69.5} & \textcolor{gray}{58.2} & 100\% \\
\midrule
\rowcolor{lightgray}
\multicolumn{11}{c}{\textit{Retain 192 Tokens} ($\downarrow$ 66.7\%)} \\

FastV {\scriptsize (ECCV'24)}
& 1612 & 61.2 & 57.0 & 64.8 & 50.8 & 52.7 & 67.1 & 67.3 & 52.5 & 90.0\% \\
PyramidDrop {\scriptsize (CVPR'25)}
& 1791 & 63.1 & 57.1 & 85.1 & 51.1 & 59.3 & 75.1 & 68.9 & 56.9 & 98.7\% \\
VisionZip {\scriptsize (CVPR'25)}
& 1769 & 62.9 & 57.4 & 85.5 & 51.6 & 59.3 & 76.8 & 68.8 & 57.2 & 99.0\% \\
DivPrune {\scriptsize (CVPR'25)}
& 1765 & 62.3 & 56.3 & \textbf{87.1} & \textbf{52.8} & 60.1 & 76.0 & 68.7 & 56.6 & 98.8\% \\
DART {\scriptsize (EMNLP'25)}
& 1834 & 63.7 & 57.3 & 82.7 & 52.8 & 60.0 & 74.7 & 68.9 & 57.3 & \underline{99.0}\% \\
PACT {\scriptsize (CVPR'25)}
& 1790 & 63.0 & 56.4 & 83.8 & 52.6 & \textbf{60.7} & 74.8 & 69.0 & 55.5 & 98.1\% \\
\rowcolor{lightblue}
\textbf{SPpruner (Ours)}
& \textbf{1841} & \textbf{63.8} & \textbf{57.6} & 83.3 & 52.7 & 60.5 & \textbf{77.8} & \textbf{69.2} & \textbf{57.5} & \textbf{99.4}\% \\
\midrule

\rowcolor{lightgray}
\multicolumn{11}{c}{\textit{Retain 128 Tokens} ($\downarrow$ 77.8\%)} \\

FastV {\scriptsize (ECCV'24)}
& 1490 & 56.1 & 56.4 & 59.8 & 51.3 & 49.6 & 61.8 & 60.2 & 50.6 & 82.9\% \\
PyramidDrop {\scriptsize (CVPR'25)}
& 1765 & 62.3 & 56.6 & 83.1 & 51.0 & 57.6 & 72.9 & 68.5 & 56.7 & 96.5\% \\
VisionZip {\scriptsize (CVPR'25)}
& 1758 & 62.0 & 56.7 & 83.2 & 52.9 & 57.7 & 75.6 & 68.5 & 56.8 & \underline{97.6}\% \\
DivPrune {\scriptsize (CVPR'25)}
& 1711 & 61.3 & 54.9 & \textbf{87.0} & \textbf{53.6} & 59.3 & 74.1 & 68.4 & 56.0 & 97.4\% \\
DART {\scriptsize (EMNLP'25)}
& 1844 & 63.1 & \textbf{57.3} & 80.1 & 53.5 & 58.2 & 71.3 & 68.7 & 56.6 & 97.2\% \\
PACT {\scriptsize (CVPR'25)}
& 1757 & 62.2 & 54.3 & 83.0 & 52.8 & \textbf{60.5} & 74.8 & 69.1 & 54.9 & 96.9\% \\
\rowcolor{lightblue}
\textbf{SPpruner (Ours)}
& \textbf{1849} & \textbf{63.4} & 57.0 & 80.3 & 53.0 & 59.2 & \textbf{77.8} & \textbf{69.2} & \textbf{56.9} & \textbf{98.5}\% \\
\midrule

\rowcolor{lightgray}
\multicolumn{11}{c}{\textit{Retain 64 Tokens} ($\downarrow$ 88.9\%)} \\ 
FastV {\scriptsize (ECCV'24)}
& 1256 & 48.0 & 52.7 & 48.0 & 50.8 & 46.1 & 55.0 & 51.1 & 47.8 & 74.0\% \\
PyramidDrop {\scriptsize (CVPR'25)}
& 1579 & 57.9 & 49.5 & 74.4 & 50.7 & 54.6 & 69.2 & 69.2 & 53.7 & 88.6\% \\
VisionZip {\scriptsize (CVPR'25)}
& 1688 & 55.1 & 45.9 & 77.0 & 52.4 & 55.1 & 72.4 & 69.0 & \textbf{55.6} & 90.4\% \\
DivPrune {\scriptsize (CVPR'25)}
& 1615 & 60.0 & 52.5 & \textbf{85.6} & 53.3 & \textbf{57.7} & 71.2 & 67.9 & 54.6 & \underline{94.4}\% \\
DART {\scriptsize (EMNLP'25)}
& 1749 & 61.4 & 53.9 & 73.8 & 52.5 & 56.0 & 67.1 & 68.6 & 54.1 & 93.4\% \\
PACT {\scriptsize (CVPR'25)}
& 1564 & 60.1 & 52.1 & 81.7 & 52.2 & 59.5 & 73.1 & 69.0 & 53.2 & 94.0\% \\
\rowcolor{lightblue}
\textbf{SPpruner (Ours)}
& \textbf{1750} & \textbf{62.0} & \textbf{54.6} & 75.2 & \textbf{52.7} & 56.3 & \textbf{73.6} & \textbf{69.6} & 55.4 & \textbf{95.2}\% \\
\bottomrule
\end{tabular}} \vspace{-2ex}
\end{table*}

\subsection{Context-Aware Structural Scanning Module (CASSM)}
While the identified focal tokens capture the core subjects, a disjoint set of visual subject tokens is insufficient for holistic image understanding. To address the \textit{Loss of Context}, we develop a context-aware structural scanning module that simulates the scanning mechanism of human visual system to aggregate contextual cues and restore global relationships associated with these subjects. To ensure the preservation of sufficient contextual cues, we define a contextual utility function $\mathcal{U}(\mathbf{x}^{\mathrm{c}})$ that evaluates candidate tokens $\mathbf{x}^{\mathrm{c}}$ across two dimensions: \textit{structural dependency} and \textit{semantic alignment}, collectively yielding a informative representation of the subject's neighboring structural context:
\begin{equation}
\mathcal{U}(\mathbf{x}^{\mathrm{c}}) =
\underbrace{\mathcal{M}(\mathbf{x}^{\mathrm{c}} \mid \mathcal{F}, \mathcal{C})}_{\text{Structural Dependency}} +
\underbrace{\mathcal{R}(\mathbf{x}^{\mathrm{c}} \mid \mathbf{X}{\mathrm{q}})}_{\text{Semantic Alignment}},
\label{eq:UtilityFunctional}
\end{equation}
\noindent where $\mathcal{C}$ is candidate token set, $\mathcal{M}(\cdot)$ captures structural dependency between subject and candidate tokens, and $\mathcal{R}(\cdot)$ ensures that the scanning remains relevant to user's query. This formulation ensures that preserved tokens capture sufficient structural context, while remaining highly aligned with the user's query. Furthermore, the structural dependency term $\mathcal{M}(\mathbf{x}^{\mathrm{c}} \mid \mathcal{F}, \mathcal{C})$ is evaluated from two perspectives:
\begin{equation}
\begin{aligned}
    \mathcal{M}(\mathbf{x}^{\mathrm{c}} \mid \mathcal{F},\mathcal{C})=\mathbb{E}_{\mathbf{x}^{\mathrm{f}} \in \mathcal{F}}
    \left[\frac{\mathbf{x}^{\mathrm{c}} \cdot \mathbf{x}^{{\mathrm{f}}\top}}{\|\mathbf{x}^{\mathrm{c}}\|_2 , \|\mathbf{x}^{\mathrm{f}}\|_2}
    \right]+ \\
    \mathbb{E}_{\mathbf{z} \in \mathcal{C}}
    \left[1 - \frac{\mathbf{x}^{\mathrm{c}} \cdot \mathbf{z}^{\top}}{\|\mathbf{x}^{\mathrm{c}}\|_2 , \|\mathbf{z}\|_2}
    \right],
\end{aligned}
\label{eq:Operator}
\end{equation}
\noindent where the first term encourages focal tokens to maintain appropriate structural correlation with their neighboring tokens, preserving local structural coherence. The second term evaluates global discriminability among candidate tokens, facilitating a more discriminative structural representation.

Consequently, we develop a \textit{Structure-Responsive Sampling} (SRS) mechanism to optimize scanning granularity. This module progressively modulates retention stride according to structural divergence between focal tokens and candidate tokens, ensuring the retention of the most informative contextual cues. This procedure can be formulated as:
\begin{equation}
\begin{aligned}
    \mathbf{d}(\mathbf{x}^{\mathrm{c}}) &= \exp\left( -\mathbb{E}_{\mathbf{x}^{\mathrm{f}} \in \mathcal{F}}
    \left[\frac{\mathbf{x}^{\mathrm{c}} \cdot \mathbf{x}^{{\mathrm{f}}\top}}{\|\mathbf{x}^{\mathrm{c}}\|_2 , \|\mathbf{x}^{\mathrm{f}}\|_2}
    \right] \right), \\ \delta &= \mathbb{E}_{\mathbf{c} \in \mathcal{C}} [\mathbf{d}(\mathbf{x}^{\mathrm{c}})].
\end{aligned}
\end{equation}
The retention stride $\Delta$ is then dynamically adjusted:
\begin{equation}
\Delta = \max \left( 1, \lfloor (\mathrm{N}_{\text{target}} - |\mathcal{F}|) \cdot \delta \rfloor \right).
\end{equation}
When candidate tokens lie in proximity to candidate tokens, the retention stride $\Delta$ will increase to preserve more contextual information reflecting the subject's content. Conversely, the stride should be reduced. Intuitively, this strategy adaptively adjusts the sampling stride based on the structural discrepancy between the subject and candidate tokens, reconciling contextual sufficiency and compression efficiency.

\subsection{Theoretical Analysis}
To rigorously justify the reliability of SPpruner, we analyze the theoretical upper bound of the approximation error induced by our token reduction paradigm. Let $f(\cdot)$ denote the transformer function and $\mathbf{X}_{\mathrm{v}}$ be the input visual tokens. Our goal is to bound the error $\|f(\mathbf{X}_{\mathrm{v}}) - f(\tilde{\mathbf{X}}_\mathbf{v})\|$, where $\tilde{\mathbf{X}}_\mathbf{v}$ is retained token set. Based on the Lipschitz continuity assumption of Transformers (Assumption~\ref{ass:transformer_property}), the output error is bounded by the Hausdorff distance $\mathbf{d}_\mathrm{H}(\mathbf{X}_{\mathrm{v}}, \tilde{\mathbf{X}}_\mathbf{v})$ between the original and reduced token sets. We formally present the theoretical guarantee of SPpruner as follows:

\begin{theorem}
\label{thm:upper_bound}
(Approximation Error Bound). Given initial visual token set $\mathbf{X}_{\mathrm{v}}$ and retained subset $\tilde{\mathbf{X}}_\mathbf{v}$ retained by SPpruner. Under the assumption that the transformer $f$ is $K$-Lipschitz continuous with respect to the Hausdorff distance, the approximation error is upper-bounded by:
\begin{equation}
\|f(\mathbf{X}_{\mathrm{v}}) - f(\tilde{\mathbf{X}}_\mathbf{v})\| \leq K \cdot \max_{\mathbf{x} \in \mathbf{X}_{\mathrm{v}} \setminus \tilde{\mathbf{X}}_\mathbf{v}} \min_{\tilde{\mathbf{x}} \in \tilde{\mathbf{X}}_\mathbf{v}} \|\mathbf{x} - \tilde{\mathbf{x}} \|.
\end{equation}
\end{theorem}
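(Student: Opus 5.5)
The bound follows directly from Assumption~\ref{ass:transformer_property}, which treats $f$ as a map on finite token sets and asserts $\|f(\mathbf{A}) - f(\mathbf{B})\| \leq K \cdot \mathbf{d}_\mathrm{H}(\mathbf{A},\mathbf{B})$. With $\mathbf{A} = \mathbf{X}_{\mathrm{v}}$ and $\mathbf{B} = \tilde{\mathbf{X}}_\mathbf{v}$, this gives the inequality with the Hausdorff distance on the right-hand side. So the plan is to apply the assumption once, and then show that $\mathbf{d}_\mathrm{H}(\mathbf{X}_{\mathrm{v}}, \tilde{\mathbf{X}}_\mathbf{v})$ equals the one-sided max--min expression in the statement.

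Recall the definition
\begin{equation*}
\mathbf{d}_\mathrm{H}(\mathbf{A},\mathbf{B}) = \max\Bigl\{ \max_{\mathbf{a}\in\mathbf{A}}\min_{\mathbf{b}\in\mathbf{B}}\|\mathbf{a}-\mathbf{b}\|,\ \max_{\mathbf{b}\in\mathbf{B}}\min_{\mathbf{a}\in\mathbf{A}}\|\mathbf{a}-\mathbf{b}\| \Bigr\}.
\end{equation*}
We use the fact that SPpruner only selects tokens and never merges or modifies them, so $\tilde{\mathbf{X}}_\mathbf{v} = \mathcal{F} \cup \{\text{sampled context tokens}\} \subseteq \mathbf{X}_{\mathrm{v}}$. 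Since $\tilde{\mathbf{X}}_\mathbf{v}$ is a subset, each $\tilde{\mathbf{x}} \in \tilde{\mathbf{X}}_\mathbf{v}$ has distance $0$ to $\mathbf{X}_{\mathrm{v}}$, and the second directed term vanishes. For the first directed term, split $\mathbf{X}_{\mathrm{v}}$ into $\tilde{\mathbf{X}}_\mathbf{v}$ and its complement $\mathbf{X}_{\mathrm{v}} \setminus \tilde{\mathbf{X}}_\mathbf{v}$. Each retained token has inner minimum $0$, attained at itself. Because norms are nonnegative, the outer maximum is therefore attained on the discarded set, giving
\begin{equation*}
\mathbf{d}_\mathrm{H}(\mathbf{X}_{\mathrm{v}}, \tilde{\mathbf{X}}_\mathbf{v}) = \max_{\mathbf{x} \in \mathbf{X}_{\mathrm{v}} \setminus \tilde{\mathbf{X}}_\mathbf{v}} \min_{\tilde{\mathbf{x}} \in \tilde{\mathbf{X}}_\mathbf{v}} \|\mathbf{x} - \tilde{\mathbf{x}}\|.
\end{equation*}
If nothing is discarded, the error is $0$ and the bound holds trivially with the convention that a maximum over the empty set is $0$. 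Substituting this identity into the Lipschitz inequality yields the theorem.

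There is no real technical obstacle; the care is in stating the setup precisely. Two points need checking. First, the assumption should define $f$ on variable-size sets, for example as a permutation-invariant pooled output, so that comparing $f$ on sets of sizes $\mathrm{N}$ and $\tilde{\mathrm{N}}$ makes sense. Second, the subset property $\tilde{\mathbf{X}}_\mathbf{v} \subseteq \mathbf{X}_{\mathrm{v}}$ should be read off from the construction of FIM and CASSM, both of which only index into $\mathbf{X}_{\mathrm{v}}$. Finally, it is worth remarking how this connects to the method: the CASSM dependency term and the SRS stride are aimed at keeping every discarded token close to some retained token, which is exactly what keeps the right-hand side small.
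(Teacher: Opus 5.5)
Your proof is correct and follows the paper's argument. You apply the Lipschitz assumption, use $\tilde{\mathbf{X}}_\mathbf{v}\subseteq\mathbf{X}_{\mathrm{v}}$ to remove the directed term from retained to original tokens, and identify the remaining term with the coverage radius. You are slightly more careful than the paper, which stops at a supremum over all of $\mathbf{X}_{\mathrm{v}}$ without noting that retained tokens contribute zero (so the maximum may be restricted to the discarded set); the paper's further FIM/SRS discussion is heuristic commentary, not part of the proof.
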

Crucially, SPpruner minimizes this bound through the dual-module design: focus identification module ensures that high-fidelity subject tokens are included in $\tilde{\mathbf{X}}_{\mathrm{v}}$, while the context-aware structural scanning module employs a structural-responsive sampling mechanism to progressively adjust retention sampling stride $\Delta$ based on structural divergence. This ensures that any discarded token $\mathbf{x}$ is theoretically situated within a specific radius of a retained token $\tilde{\mathbf{x}}$, thereby controlling the worst-case structural deviation. This theoretical result confirms that SPpruner can preserve the theoretical visual understanding capability of the vanilla model, and a detailed proof is provided in the Appendix~\ref{seq:proof_of_theorem}.

\begin{table*}[t]
\centering
\caption{Performance comparison on LLaVA-Next-7B under varying token budgets.}
\label{tab:llava-next}
\renewcommand{\arraystretch}{1}
\resizebox{0.99\textwidth}{!}{
\begin{tabular}{l|cccccccc|c}
\toprule
Methods
& MME & MMB(EN) & MMB(CN)
& POPE & MMVet & GQA 
& SQA$^\text{IMG}$ & TextVQA &  Rel.\\
\midrule

\rowcolor{lightgray}
\multicolumn{10}{c}{\textit{Upper Bound: 2880 Tokens (100\%)}} \\

\textcolor{gray}{Vanilla}
& \textcolor{gray}{1842} & \textcolor{gray}{65.8} & \textcolor{gray}{57.3}
& \textcolor{gray}{86.8} & \textcolor{gray}{40.0} & \textcolor{gray}{62.5}
& \textcolor{gray}{67.6} & \textcolor{gray}{60.3} & 100\% \\
\midrule

\rowcolor{lightgray}
\multicolumn{10}{c}{\textit{Retain 320 Tokens} ($\downarrow$ 88.9\%)} \\

FastV {\scriptsize (ECCV'24)}
& 1661 & 61.6 & 51.9 & 71.7 & 25.0 & 55.9 & 62.8 & 55.7 & 86.8\% \\
PyramidDrop {\scriptsize (CVPR'25)}
& 1660 & 57.9 & 49.5 & 81.2 & 30.2 & 56.4 & 69.5 & 56.9 & 90.1\% \\
VisionZip {\scriptsize (CVPR'25)}
& 1715 & 63.1 & 55.7 & 82.2 & 38.8 & 58.9 & 67.6 & 58.8 & 96.2\% \\
DivPrune {\scriptsize (CVPR'25)}
& 1702 & 64.2 & 55.7 & \textbf{84.7} & 38.9 & 61.1 & 67.7 & 56.2 & 96.1\% \\
DART {\scriptsize (EMNLP'25)}
& 1697 & 65.4 & 56.7 & 83.5 & \textbf{40.6} & 59.4 & 68.3 & 58.4 & \underline{96.4}\% \\

\rowcolor{lightblue}
\textbf{SPpruner (Ours)}
& \textbf{1730} & \textbf{65.0} & \textbf{57.5}
& 84.0 & 38.1 & \textbf{60.8}
& \textbf{68.0} & \textbf{59.3} & \textbf{97.7}\% \\
\bottomrule
\end{tabular}}
\end{table*}

\begin{table*}[t]
\centering
\caption{Performance comparison on Qwen2.5-VL-32B under varying token budgets.}
\label{tab:qwen2.5vl-32b}
\renewcommand{\arraystretch}{1.05}
\setlength{\tabcolsep}{4pt}
\resizebox{0.99\textwidth}{!}{
\begin{tabular}{l|cccccccccccc|c}
\toprule
Methods
& MMB(EN) & MMB(CN) & MME-P & MME-C & MMStar & POPE & MMMUVal
& MuirBench & RealWorld & TextVQA & DocVQA & OCRBench & Rel. \\
\midrule

\rowcolor{lightgray}
\multicolumn{14}{c}{\textit{Upper Bound: All Tokens (100\% Tokens)}} \\
\midrule

\textcolor{gray}{Vanilla}
& \textcolor{gray}{86.2} & \textcolor{gray}{83.3} & \textcolor{gray}{1731}
& \textcolor{gray}{704}  & \textcolor{gray}{65.5} & \textcolor{gray}{84.2}
& \textcolor{gray}{36.2} & \textcolor{gray}{47.3} & \textcolor{gray}{68.5} & \textcolor{gray}{76.5}
& \textcolor{gray}{92.7} & \textcolor{gray}{822} & 100\% \\
\midrule

\rowcolor{lightgray}
\multicolumn{14}{c}{\textit{Retain 70\% Tokens} ($\downarrow$ 30\%)} \\

FastV {\scriptsize (ECCV'24)}
&85.7 &\textbf{82.8} &1674 &\textbf{718} &63.7 &\textbf{84.9} &37.6 &47.4 &66.5 &68.4 &84.4 &711 & 97.0\% \\
DART {\scriptsize (EMNLP'25)}
&86.0 &82.6 &\textbf{1724} &702 &63.8 &83.2 &36.6 &47.1 &\textbf{68.6} &75.9 &92.2 &805 & 99.3\% \\
\rowcolor{lightblue}
\textbf{SPpruner (Ours)}
&\textbf{86.1} &82.5 &1712 &709 &\textbf{64.9} &84.0 &34.8 &47.4 &68.4 &\textbf{76.0} &\textbf{92.4} &\textbf{814} & \textbf{99.3\%} \\
\midrule

\rowcolor{lightgray}
\multicolumn{14}{c}{\textit{Retain 50\% Tokens} ($\downarrow$ 50\%)} \\

FastV {\scriptsize (ECCV'24)}
&85.1 &81.2 &1628 &\textbf{698} &60.0 &82.7 &36.0 &47.0 &67.4 &67.8 &78.5 &632 & 93.9\% \\
DART {\scriptsize (EMNLP'25)}
&85.3 &81.6 &1696 &664 &61.9 &83.3 &35.9 &46.8 &67.3 &74.6 &87.2 &751 & 96.9\% \\
\rowcolor{lightblue}
\textbf{SPpruner (Ours)}
&\textbf{85.6} &\textbf{82.0} &\textbf{1711} &689 &\textbf{63.8} &\textbf{83.5} &\textbf{37.2} &\textbf{47.0} &\textbf{68.0} &\textbf{75.3} &\textbf{91.1} &\textbf{798} & \textbf{98.9\%} \\
\midrule

\rowcolor{lightgray}
\multicolumn{14}{c}{\textit{Retain 30\% Tokens} ($\downarrow$ 70\%)} \\

FastV {\scriptsize (ECCV'24)}
&80.8 &76.2 &1523 &580 &52.0 &76.0 &31.6 &46.8 &64.7 &65.3 &63.5 &510 & 86.8\% \\
DART {\scriptsize (EMNLP'25)}
&82.5 &79.5 &1654 &625 &56.6 &80.3 &35.5 &\textbf{47.0} &63.9 &72.0 &69.4 &628 & 92.0\% \\
\rowcolor{lightblue}
\textbf{SPpruner (Ours)}
&\textbf{84.5} &\textbf{79.5} &\textbf{1662} &\textbf{657} &\textbf{58.6} &\textbf{81.0} &\textbf{35.9} &46.8 &\textbf{66.8} &\textbf{73.0} &\textbf{81.0} &\textbf{704} & \textbf{94.4\%} \\

\bottomrule
\end{tabular}} \vspace{-1ex}
\end{table*}

\begin{table*}[t]
\centering
\caption{\textbf{Efficiency Comparison on Qwen2.5-VL.} We compare SPpruner against the Vanilla baseline and DART~\cite{dart}. \textbf{Latency} refers to the prefill time. SPpruner achieves the highest speedup ratios with negligible performance impact across all benchmarks.}
\label{tab:efficiency_analysis}
\renewcommand{\arraystretch}{1}
\resizebox{0.99\textwidth}{!}{
\begin{tabular}{l|l|c|ccccc}
\toprule
Benchmark & Method & Visual Tokens & Prefill Time (s) $\downarrow$ & FLOPs (T) $\downarrow$ & GPU Mem. (GB) $\downarrow$ & Performance $\uparrow$ & SpeedUp $\uparrow$\\
\midrule

\multirow{3}{*}{MME}
 & Vanilla & 866 & 0.2215 & 2.1015 & 16.01 & 2329 & 1.00$\times$ \\
 & DART {\scriptsize (CVPR'25)}    & 192 & 0.1068 & 0.5798 & 15.86 & 2174 & 2.07$\times$ \\

\rowcolor{lightblue} \cellcolor{white}
 & \textbf{SPpruner {\scriptsize (Ours)} }   & 192 & \textbf{0.0876} & \textbf{0.5798} & \textbf{15.86} & \textbf{2182} & \textbf{2.53}$\times$ \\
\midrule

\multirow{3}{*}{MMStar}
 & Vanilla & 360 & 0.1337  & 0.9802 & 15.97 & 62.9 & 1.00$\times$ \\
 & DART {\scriptsize (CVPR'25)} & 80 & 0.0833 & 0.3688 & 15.74 & 49.7 & 1.60$\times$ \\
\rowcolor{lightblue} \cellcolor{white}
 & \textbf{SPpruner {\scriptsize (Ours)} }   & 80 & \textbf{0.0618} & \textbf{0.3688} & \textbf{15.74} & \textbf{50.7} & \textbf{2.16$\times$} \\
\midrule

\multirow{3}{*}{MMB(EN)}
 & Vanilla & 278 & 0.1086 & 0.8232 & 15.99 & 83.9 & 1.00$\times$ \\
 & DART {\scriptsize (CVPR'25)}      & 62 & 0.0752 & 0.3537 & 15.60    & 78.2 & 1.44$\times$ \\
\rowcolor{lightblue} \cellcolor{white}
 & \textbf{SPpruner {\scriptsize (Ours)} }   & 62 & \textbf{0.0615} & \textbf{0.3537} & \textbf{15.60}    & \textbf{80.0} & \textbf{1.77}$\times$ \\

\bottomrule
\end{tabular}}\vspace{-2ex}
\end{table*}

\begin{table}[t]
\centering
\caption{Performance comparison of Qwen2.5-VL-32B on Flickr30K under varying token budgets.}
\label{tab:qwen2.5vl-32bcaption}
\renewcommand{\arraystretch}{1.2}
\setlength{\tabcolsep}{4pt}
\LARGE
\resizebox{\columnwidth}{!}{
\begin{tabular}{l|ccccccc|c}
\toprule
Methods
& BLEU-1 & BLEU-2 & BLEU-3 & BLEU-4 & METEOR & ROUGE-L & CIDEr & Rel. \\
\midrule

\rowcolor{lightgray}
\multicolumn{9}{c}{\textit{Upper Bound: All Tokens (100\% Tokens)}} \\

\textcolor{gray}{Vanilla}
& \textcolor{gray}{0.4653} & \textcolor{gray}{0.2791} & \textcolor{gray}{0.1694}
& \textcolor{gray}{0.1027} & \textcolor{gray}{0.1963} & \textcolor{gray}{0.3652}
& \textcolor{gray}{0.2903} & 100\% \\
\midrule



\rowcolor{lightgray}
\multicolumn{9}{c}{\textit{Retain 50\% Tokens} ($\downarrow$ 50\%)} \\


FastV {\scriptsize (ECCV'24)}
&0.4625 &0.2750 &0.1652 &0.0988 &0.1900 &0.3603 &0.2947 & 98.4\% \\
DART {\scriptsize (EMNLP25)}
&0.4605 &0.2746 &0.1660 &0.1001 &0.1933 &0.3608 &0.2854 & 98.3\% \\
\rowcolor{lightblue}
\textbf{SPpruner {\scriptsize (Ours)}}
&\textbf{0.4628} &\textbf{0.2769} &\textbf{0.1680} &\textbf{0.1017} &\textbf{0.1945} &\textbf{0.3626} &\textbf{0.2894} & \textbf{99.3}\% \\
\midrule

\rowcolor{lightgray}
\multicolumn{9}{c}{\textit{Retain 30\% Tokens} ($\downarrow$ 70\%)} \\


FastV {\scriptsize (ECCV'24)}
&0.4436 &0.2561 &0.1496 &0.0870 &0.1763 &0.3432 &0.2647 & 90.7\% \\
DART {\scriptsize (EMNLP25)}
&0.4524 &0.2659 &0.1588 &0.0946 &0.1864 &0.3519 &0.2764 & 95.0\% \\
\rowcolor{lightblue}
\textbf{SPpruner {\scriptsize (Ours)}}
&\textbf{0.4591} &\textbf{0.2723} &\textbf{0.1645} &\textbf{0.0990} &\textbf{0.1903} &\textbf{0.3572} &\textbf{0.2884} & \textbf{97.7}\% \\
\midrule

\rowcolor{lightgray}
\multicolumn{9}{c}{\textit{Retain 10\% Tokens} ($\downarrow$ 90\%)} \\

FastV {\scriptsize (ECCV'24)}
&0.3773 &0.1917 &0.1016 &0.0546 &0.1345 &0.2927 &0.1425 & 65.8\% \\
DART {\scriptsize (EMNLP25)}
&0.4156 &0.2289 &0.1293 &0.0736 &0.1603 &0.3169 &0.2250 & 80.8\% \\
\rowcolor{lightblue}
\textbf{SPpruner {\scriptsize (Ours)}}
&\textbf{0.4245} &\textbf{0.2376} &\textbf{0.1358} &\textbf{0.0779} &\textbf{0.1658} &\textbf{0.3234} &\textbf{0.2457} & \textbf{84.3}\% \\

\bottomrule
\end{tabular}}
\end{table}
\vspace{-1ex}
\section{Experiments}
In this section, we evaluate SPpruner on image understanding task, document understanding task, image captioning task, and video understanding task on a total of 22 benchmarks, comparing its performance with previous token reduction paradigms. Furthermore, we also provide the ablation studies for the metrics and hyperparameters used in SPpruner. Extensive experiments demonstrate that SPpruner has superior performance in preserving a broader visual subject spectrum and structural context information, ensuring the model retains strong visual understanding capabilities in complex scenes even after token reduction. Details of the benchmarks and model are provided in the Appendix~\ref{seq::detailed_experiment_settings}.

\subsection{Image Understanding Task}
For both single-image and multi-image benchmarks, we conducted extensive evaluations on LLaVA-1.5-7B, LLaVA-Next-7B, and Qwen2.5VL-32B. As shown in Table~\ref{tab:llava-1.57b}, Table~\ref{tab:llava-next}, and Table~\ref{tab:qwen2.5vl-32b}, SPpruner maintains its performance advantage and achieves SOTA results across evaluations. Taking LLaVA-Next-7B as an instance, SPpruner surpasses FastV, Divprune, and DART by 3.7\%, 1.5\%, and 1.8\%, respectively, on the MME benchmark at 88.9\% reduction ratio. Notably, in the multi-image evaluation of Qwen2.5-VL-32B at 70\% reduction ratio, SPpruner outperforms FastV and DART algorithms by 1.6\% and 5.0\%, respectively.


\subsection{Document Understanding Task}
Document comprehension tasks impose stringent demands on localized details and spatial layouts in
images, presenting challenges for compression algorithms. To assess the efficacy of SPpruner, we conduct extensive evaluations, using the LLaVA series and Qwen2.5-VL on three challenging document understanding benchmarks: TextVQA, DocVQA, and OCRBench. By preserving sufficient contextual cues, SPpruner maintains a significant performance advantage even under high reduction ratios. Notably, as illustrated in Table~\ref{tab:qwen2.5vl-32b}, SPpruner shows an average and great 3.6\% improvement over sub-optimal Dart on three document understanding benchmarks under 50\% reduction ratio, validating the efficacy of our approach in retaining fine-grained contextual details. Additionally, SPpruner exhibits stronger capability in maintaining performance compared to competing algorithms, particularly in high-ratio reduction scenarios.


\subsection{Image Captioning Task}
Image captioning serves as a critical measurement for evaluating the preservation of key visual subjects and their contextual information. To assess the generalization capability of SPpruner, we extended our evaluation to Flickr, and results on Qwen2.5-VL-32B are detailed in Table~\ref{tab:qwen2.5vl-32bcaption}. While FastV, DART, and SPpruner yield comparable performance at 50\% reduction ratio, their trajectories diverge significantly as sparsification intensifies. At 70\% reduction, FastV and DART suffer a precipitous performance collapse, dropping by 7.7\% and 3.3\%, respectively. In contrast, SPpruner demonstrates remarkable resilience, retaining 97.7\% of the vanilla performance. Furthermore, SPpruner limits the degradation in average performance to just 2.3\%, outperforming the 5\% decline observed in DART. These empirical findings demonstrate that existing training-free TR algorithms fail to maintain and sustain high fidelity and semantic consistency under high reduction ratios. SPpruner effectively mitigates this by preserving a broader visual subject spectrum and capturing informative structural context associated with these visual subjects, ensuring holistic visual understanding and coherent caption generation capacity.

\begin{table}[t]
\centering
\caption{Video performance comparison on LLaVA-OneVision-7B under varying token budgets.}
\label{tab:llava-onevision}
\renewcommand{\arraystretch}{1.05}
\resizebox{\columnwidth}{!}{
\begin{tabular}{l|ccccc}
\toprule
Methods & VideoMME & $\text{VideoMME}^{\text{w/o}}$ 
& SeedBench & MLVU & Rel. \\
\midrule

\rowcolor{lightgray}
\multicolumn{6}{c}{\textit{Upper Bound: All Tokens (100\% Tokens)}} \\
\midrule

\textcolor{gray}{Vanilla} & \textcolor{gray}{58.4} & \textcolor{gray}{61.2} & \textcolor{gray}{57.0} & \textcolor{gray}{64.7} & 100\% \\
\midrule

\rowcolor{lightgray}
\multicolumn{6}{c}{\textit{Retain 35\% Tokens} ($\downarrow$ 65\%)} \\
\midrule

FastV { (ECCV'24)} & 57.1 & 59.8 & 56.8 & 61.5 & 97.6\% \\
\rowcolor{lightblue}
\textbf{SPpruner (Ours)} & \textbf{57.9} & \textbf{60.2} & \textbf{56.5} & \textbf{64.1} & \textbf{98.9}\% \\
\midrule

\rowcolor{lightgray}
\multicolumn{6}{c}{\textit{Retain 25\% Tokens} ($\downarrow$ 75\%)} \\
\midrule

FastV { (ECCV'24)} &56.0 &58.7 &56.4 & 61.0 & 96.3\%\\
\rowcolor{lightblue}
\textbf{SPpruner (Ours)} &\textbf{57.2} &\textbf{60.4} &\textbf{56.7} & \textbf{63.8} & \textbf{98.7}\%\\
\midrule

\rowcolor{lightgray}
\multicolumn{6}{c}{\textit{Retain 15\% Tokens} ($\downarrow$ 85\%)} \\
\midrule

FastV { (ECCV'24)} &53.6 &57.6 &55.7 & 59.1 & 93.7\%\\
\rowcolor{lightblue}
\textbf{SPpruner (Ours)} &\textbf{56.2} &\textbf{58.7} &\textbf{55.7} & \textbf{62.7} & \textbf{96.7}\%\\

\bottomrule
\end{tabular}}\vspace{-3ex}
\end{table}

\subsection{Video Understanding Task}
We conducted a comparative evaluation of SPpruner and Fastv based on the LLaVA-OneVision. As shown in Table~\ref{tab:llava-onevision}, SPpruner demonstrates superior video understanding performance compared
to FastV. Specifically, SPpruner achieves lossless performance across all video understanding benchmarks with different token budgets. For instance, at 65\% reduction ratio, it incurs only a marginal accuracy degradation of approximately 1.1\%, substantially outperforming FastV. Furthermore, SPpruner exhibits average video understanding metrics that surpass FastV by 2.4\% under 75\% reduction ratio. These results demonstrate that as the reduction ratio increases, SPpruner maintains superior performance without suffering a significant degradation.

\subsection{Efficiency Analysis}
As shown in Table~\ref{tab:efficiency_analysis}, our method significantly enhances inference efficiency. Benchmarked with the Qwen2.5-VL-7B on three image understanding under 77.8\% reduction ratio, SPpruner reduces the prefill latency by 53\% and FLOPs by 64\%, achieving an average speedup of 2.15$\times$. 


\begin{figure*}[t]
\vspace{-1ex}
\begin{center}
\includegraphics[width=0.99\textwidth]{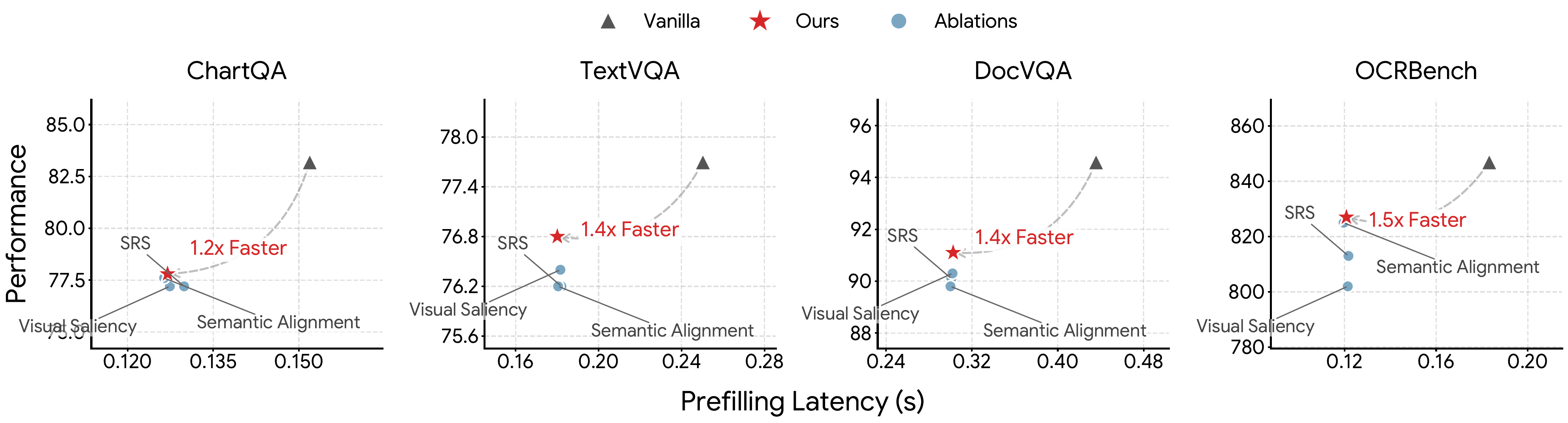}
\end{center}
\vspace{-2ex}
\caption{\textbf{Ablation Studies}. The performance drop without SRS confirms the necessity of adaptive retention strides, while the other metrics validate their role in saliency identification. By unifying these, SPpruner outperforms all variants to achieve 1.2$\times$--1.5$\times$ speedups with comparable accuracy on chart and document understanding tasks.}
\label{fig:ablation_component}
\vspace{-2ex}
\end{figure*}

\begin{figure*}[t]
\begin{center}
\includegraphics[width=0.99\textwidth]{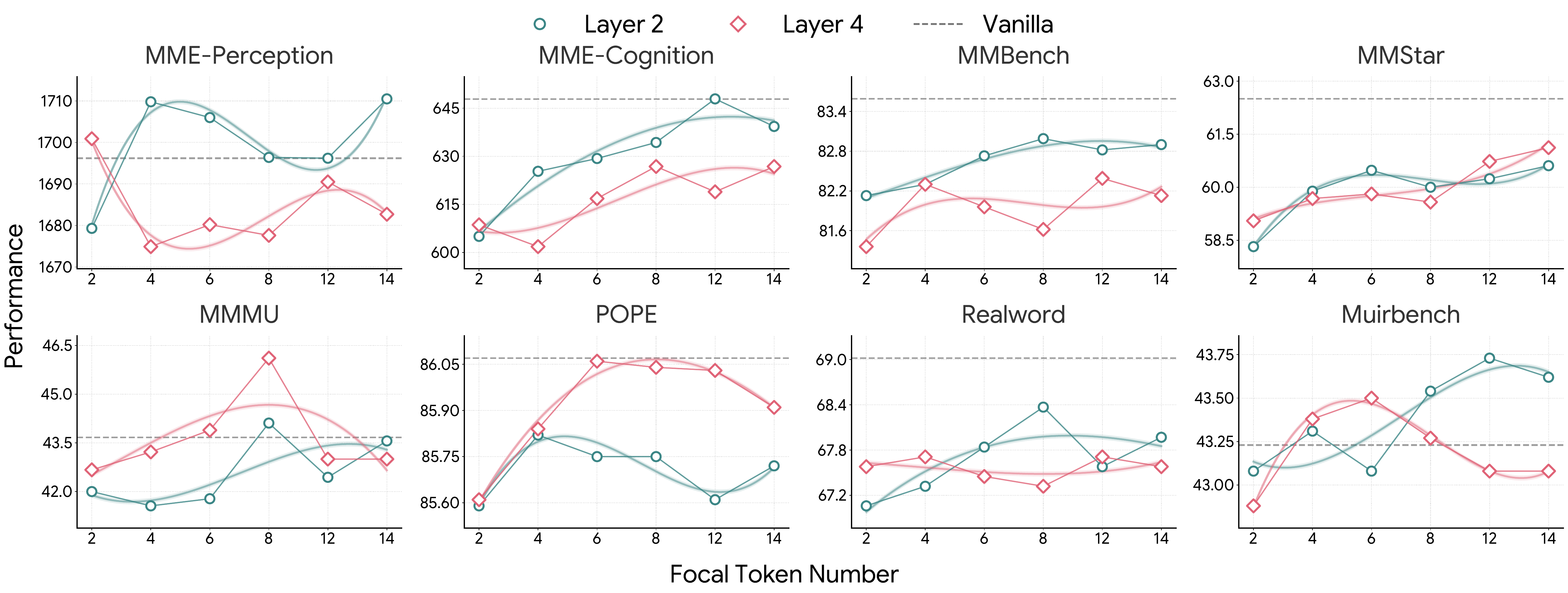}
\end{center}
\vspace{-1ex}
\caption{\textbf{Ablation Studies}. This figure shows that too few focal tokens impair holistic perception by omitting subjects, while too many reduce SRS to generic Top-$\mathrm{K}$ selection due to diminished contextual cues.}
\label{fig:ablation_hyper}
\vspace{-3ex}
\end{figure*}

\subsection{Ablation Studies}
In this section, we conduct ablation studies on the Qwen2.5-VL-7B model across four chart and document understanding benchmarks under 40\% reduction ratio. Furthermore, we provide an in-depth analysis of how the choice of retention metrics and components in our two modules jointly affects both model performance and inference latency.
\paragraph{The importance of the FIM metric and SRS mechanism.} We analyze the contribution of the two metrics in FIM as well as the SRS mechanism during the scanning stage, as shown in Figure~\ref{fig:ablation_component}. Removing either visual saliency or semantic relevance consistently leads to performance degradation across all evaluated benchmarks, indicating that both cues are essential for reliable subject identification. Replacing SRS with a uniform Top-$\mathrm{K}$ selection strategy results in inferior performance under comparable prefilling latency, demonstrating that static sampling fails to effectively balance efficiency and contextual fidelity. In contrast, the proposed SRS mechanism achieves a more favorable trade-off, yielding approximately 1.2$\times$--1.5$\times$ speedup while maintaining comparable performance across chart and document understanding tasks. Notably, the additional overhead introduced by SRS is negligible, as SPpruner progressively adjusts sampling stride based on the structural divergence between focal tokens and candidate tokens, enabling fast and accurate context completion without sacrificing efficiency. \vspace{-2ex}
\paragraph{The selection of focal number and reduction layer.} 
Figure~\ref{fig:ablation_hyper} illustrates the impact of the number of focal tokens and the reduction layer on model performance. When the layer is fixed, increasing the number of focal tokens consistently yields an initial performance gain followed by a decline across all benchmarks. Regarding the number of focal tokens, an insufficient number fails to capture adequate visual subjects, impairing the model’s holistic image understanding. Conversely, when the focal token number is too large, the SRS mechanism degenerates into a simple Top-$\mathrm{K}$ selection, as there is insufficient structural information to guide the retention process. Based on this, we identify 8 as the optimal focal token number, and thus set this value for Qwen2.5-VL. Furthermore, extensive empirical evaluation shows that layer 2 offers the best efficiency–accuracy trade-off, and we thus adopt it as the reduction layer.

\begin{table*}[t]
\centering
\caption{\textbf{Performance Comparison on OpenPangu-Embedded-7B.} We evaluate the performance under various token budgets on the GSM8K benchmark. SPpruner achieves significant prefill time reduction while maintaining high relative performance.}
\label{tab:openpangu-7b}
\renewcommand{\arraystretch}{1}
\resizebox{0.8\textwidth}{!}{
\begin{tabular}{l|l|cccc}
\toprule
Benchmark & Method & Performance $\uparrow$ & Rel. $\uparrow$ & Prefill Time (s) $\downarrow$ & SpeedUp $\uparrow$ \\
\midrule

\multirow{5}{*}{GSM8K}
 & Vanilla (OpenPangu) & 0.8197 & 100\% & 0.273 & 1.00$\times$ \\
 & SPpruner ($\downarrow$84.0\%) & 0.8053 & 98.2\% & 0.217 & 1.26$\times$ \\
 & SPpruner ($\downarrow$88.0\%) & 0.8061 & 98.3\% & 0.217 & 1.26$\times$ \\
 & SPpruner ($\downarrow$91.6\%) & 0.7629 & 93.1\% & 0.204 & 1.34$\times$ \\
 & SPpruner ($\downarrow$92.0\%) & 0.7568 & 92.3\% & 0.203 & 1.34$\times$ \\
\bottomrule
\end{tabular}}\vspace{-3ex}
\end{table*}

\subsection{Extended Experiments}
To further validate the generality and versatility of our proposed subject-centric progressive paradigm, we extend its application beyond Vision-Language Models to pure language models. While the core idea of SPpruner remains modality-agnostic, the inherent differences in information redundancy between linguistic and visual modalities necessitate certain adaptations. Specifically, we remove the visual-text correlation constraints and introduce a hierarchical pruning strategy with varying pruning rates across different layers to better suit textual redundancy. The remainder of the architecture remains unchanged. Based on this adapted design, we conducted additional experiments on the GSM8K \citep{gsm8k} benchmark using OpenPangu-Embedded-7B \citep{openpangu}. As shown in Table 8, our method achieves significant efficiency gains while preserving robust performance: it maintains 92.3\% of the original accuracy even when prefill time is reduced by 25.6\%, and incurs a marginal accuracy drop of merely 1.7\% under a 20.5\% prefill time reduction.

\section{Conclusion}
In this work, we presented SPpruner, a novel subject-centric progressive reduction paradigm designed to overcome the limitations of query-centric reduction. By mirroring the \textit{Focus-then-Context} mechanism of human visual perception, we transform token reduction from a local statistical filtering task into a bio-inspired scanning process. Our approach is built on two properties critical for maintaining visual fidelity: (i) capturing a comprehensive spectrum of visual subjects, and (ii) preserving their structural context. To fulfill these objectives, we first implemented a focus identification module that leverages both intrinsic visual saliency and semantic dependency to capture broad and diverse visual subjects. Subsequently, we developed a context-aware structural scanning module that progressively modulates sampling stride based on structural disparity, precisely restoring the environmental context associated with these subjects. Extensive validation across diverse benchmarks confirms that SPpruner achieves a superior trade-off between inference efficiency and model accuracy. Our architecture-agnostic, plug-and-play solution effectively preserves visual understanding capability in resource-constrained environments.


\section*{Impact Statement}
This paper aims to advance efficient inference in vision-language models and falls under foundational methodological exploration. While our work may yield positive societal impacts regarding efficient inference and the deployment of vision-language systems, none of which we deem necessary to specifically highlight here.

\bibliography{example_paper}
\bibliographystyle{icml2026}

\newpage
\appendix
\onecolumn


\section{Proof of Theorem \ref{thm:upper_bound}}
\label{seq:proof_of_theorem}
\noindent 
\begin{assumption}
\textbf{Transformer Property.} 
The model $f$ is $K$-Lipschitz continuous with respect to the Hausdorff distance between token sets. Formally, for any two token sets $\mathbf{X}_1, \mathbf{X}_2 \subseteq \mathbb{R}^{\mathrm{d}}$:
\begin{equation}
\|f(\mathbf{X}_1) - f(\mathbf{X}_2)\| \leq K \cdot d_H(\mathbf{X}_1, \mathbf{X}_2),
\end{equation}
where $\mathbf{d}_\mathrm{H}(\mathbf{X}_1, \mathbf{X}_2) \triangleq \max \{ \sup_{\mathbf{x}_1 \in \mathbf{X}_1} \inf_{\mathbf{x}_2 \in \mathbf{X}_2} \|\mathbf{x}_1 - \mathbf{x}_2\|, \sup_{\mathbf{x}_2 \in \mathbf{X}_2} \inf_{x_1 \in \mathbf{X}_1} \|\mathbf{x}_1 - \mathbf{x}_2\| \}$.
\label{ass:transformer_property}
\end{assumption}

\begin{theorem}
(Approximation Error Bound). Under Assumptions~\ref{ass:transformer_property}, the output difference between original and pruned token sets is bounded by:
\begin{equation}
\|f(\mathbf{X}_{\mathrm{v}}) - f(\tilde{\mathbf{X}}_\mathbf{v})\| \leq K \cdot \max_{\mathbf{x} \in \mathbf{X}_{\mathrm{v}} \setminus \tilde{\mathbf{X}}_\mathbf{v}} \min_{\tilde{\mathbf{x}} \in \tilde{\mathbf{X}}_\mathbf{v}} \|\mathbf{x} - \tilde{\mathbf{x}} \|.
\end{equation}
\end{theorem}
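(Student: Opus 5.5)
The plan is to apply Assumption~\ref{ass:transformer_property} with $\mathbf{X}_1=\mathbf{X}_{\mathrm{v}}$ and $\mathbf{X}_2=\tilde{\mathbf{X}}_\mathbf{v}$. This gives $\|f(\mathbf{X}_{\mathrm{v}})-f(\tilde{\mathbf{X}}_\mathbf{v})\|\le K\cdot d_H(\mathbf{X}_{\mathrm{v}},\tilde{\mathbf{X}}_\mathbf{v})$. It then remains to show that, because $\tilde{\mathbf{X}}_\mathbf{v}\subseteq\mathbf{X}_{\mathrm{v}}$, the Hausdorff distance collapses to the one-sided quantity in the statement. Both sets are finite, so every $\sup/\inf$ is attained and may be written as $\max/\min$.

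First I would treat the second branch of the Hausdorff maximum, $\sup_{\tilde{\mathbf{x}}\in\tilde{\mathbf{X}}_\mathbf{v}}\inf_{\mathbf{x}\in\mathbf{X}_{\mathrm{v}}}\|\mathbf{x}-\tilde{\mathbf{x}}\|$. Every $\tilde{\mathbf{x}}$ lies in $\mathbf{X}_{\mathrm{v}}$, so choosing $\mathbf{x}=\tilde{\mathbf{x}}$ makes the inner infimum $0$, and this whole branch vanishes. For the first branch, $\max_{\mathbf{x}\in\mathbf{X}_{\mathrm{v}}}\min_{\tilde{\mathbf{x}}\in\tilde{\mathbf{X}}_\mathbf{v}}\|\mathbf{x}-\tilde{\mathbf{x}}\|$, I would split the outer index set as $\mathbf{X}_{\mathrm{v}}=\tilde{\mathbf{X}}_\mathbf{v}\cup(\mathbf{X}_{\mathrm{v}}\setminus\tilde{\mathbf{X}}_\mathbf{v})$. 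On the retained part the inner minimum is again $0$. Since the remaining terms are nonnegative, the maximum equals the maximum over the discarded tokens $\mathbf{X}_{\mathrm{v}}\setminus\tilde{\mathbf{X}}_\mathbf{v}$. Combining the two branches gives
\begin{equation*}
d_H(\mathbf{X}_{\mathrm{v}},\tilde{\mathbf{X}}_\mathbf{v})=\max_{\mathbf{x}\in\mathbf{X}_{\mathrm{v}}\setminus\tilde{\mathbf{X}}_\mathbf{v}}\min_{\tilde{\mathbf{x}}\in\tilde{\mathbf{X}}_\mathbf{v}}\|\mathbf{x}-\tilde{\mathbf{x}}\|,
\end{equation*}
and substituting this into the Lipschitz inequality yields the claim.

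The only delicate point is bookkeeping about degenerate cases. I would need $\tilde{\mathbf{X}}_\mathbf{v}\neq\emptyset$, which holds because $\tilde{\mathrm{N}}\ge|\mathcal{F}|\ge1$, so the inner minimum is defined. I would also adopt the convention that the maximum over an empty set is $0$ when nothing is pruned, in which case both sides are $0$. I would also note explicitly that the argument uses nothing specific to FIM or CASSM beyond $\tilde{\mathbf{X}}_\mathbf{v}$ being a nonempty subset of $\mathbf{X}_{\mathrm{v}}$. The method-specific content only enters in explaining why SPpruner makes the right-hand side small, not in establishing the bound itself.
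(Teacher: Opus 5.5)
Your proposal is correct and follows the same route as the paper: apply the Lipschitz assumption with $\mathbf{X}_1=\mathbf{X}_{\mathrm{v}}$, $\mathbf{X}_2=\tilde{\mathbf{X}}_\mathbf{v}$, then use $\tilde{\mathbf{X}}_\mathbf{v}\subseteq\mathbf{X}_{\mathrm{v}}$ to make the retained-to-original branch of $d_H$ vanish. You are slightly more complete than the paper, which stops at $\sup_{\mathbf{x}\in\mathbf{X}_{\mathrm{v}}}$ and never explicitly restricts to the discarded tokens $\mathbf{X}_{\mathrm{v}}\setminus\tilde{\mathbf{X}}_\mathbf{v}$ as your splitting step does, and whose later discussion of FIM and SRS is heuristic commentary rather than part of the proof of the inequality.
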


\noindent \textbf{Proof.}
Let $\mathbf{X}_{\mathrm{v}}$ be the initial set of visual tokens and $\tilde{\mathbf{X}}_{\mathrm{v}}$ be the retained set produced by SPpruner. Since SPpruner performs token reduction by selection, we have the subset property $\tilde{\mathbf{X}}_{\mathrm{v}} \subseteq \mathbf{X}_{\mathrm{v}}$.

First, we simplify the Hausdorff distance term $\mathbf{d}_\mathrm{H}(\mathbf{X}_{\mathrm{v}}, \tilde{\mathbf{X}}_{\mathrm{v}})$. The definition contains two symmetric terms:
\begin{itemize}
    \item The forward distance: $\sup_{\tilde{\mathbf{x}} \in \tilde{\mathbf{X}}_{\mathrm{v}}} \inf_{\mathbf{x} \in \mathbf{X}_{\mathrm{v}}} \|\tilde{\mathbf{x}} - \mathbf{x}\|$. Since every retained token $\tilde{\mathbf{x}}$ exists in the initial set $\mathbf{X}_{\mathrm{v}}$, the distance to the nearest neighbor is always 0. Thus, this term vanishes.
    \item The backward distance: $\sup_{\mathbf{x} \in \mathbf{X}_{\mathrm{v}}} \inf_{\tilde{\mathbf{x}} \in \tilde{\mathbf{X}}_{\mathrm{v}}} \|\mathbf{x} - \tilde{\mathbf{x}}\|$. This term represents the \textit{coverage radius}: the maximum distance from any discarded token to its nearest retained token.
\end{itemize}
Therefore, the error bound simplifies to:
\begin{equation}
\|f(\mathbf{X}_{\mathrm{v}}) - f(\widehat{\mathbf{X}}_{\mathrm{v}})\| \leq K \cdot \sup_{\mathbf{x} \in \mathbf{X}_{\mathrm{v}}} \min_{\tilde{\mathbf{x}} \in \tilde{\mathbf{X}}_{\mathrm{v}}} \|\mathbf{x} - \tilde{\mathbf{x}}\|.
\label{eq:simplified_bound}
\end{equation}

Next, we analyze how SPpruner's components minimize the term $\min_{\tilde{\mathbf{x}} \in \tilde{\mathbf{X}}_{\mathrm{v}}} \|\mathbf{x} - \tilde{\mathbf{x}}\|$:

\begin{enumerate}
    \item \textbf{High-Saliency Coverage via FIM:}
    The Focus Identification Module (FIM) selects a set $\mathcal{F}$ maximizing the composite score $\mathcal{S}(\mathbf{x}_\mathrm{i})$.
    For any token $\mathbf{x}$ with high intrinsic magnitude or semantic relevance, the probability of $\mathbf{x} \in \mathcal{F}$ is maximized. If $\mathbf{x} \in \mathcal{F}$, then $\|\mathbf{x} - \tilde{\mathbf{x}}\| = 0$. This ensures that critical regions contribute zero to the Hausdorff distance.

    \item \textbf{Contextual Coverage via CASSM \& SRS:}
    For the remaining tokens, the Structure-Responsive Sampling (SRS) determines the retention stride $\Delta$. The stride is dynamically modulated by the structural divergence $\delta$:
    \begin{equation}
        \Delta \propto \frac{1}{\delta} \approx \frac{1}{\text{Structural Discrepancy}}.
    \end{equation}
    Consider a discarded token $\mathbf{x}_{\text{disc}}$ located between two retained tokens $\tilde{\mathbf{x}}_\mathrm{i}$ and $\tilde{\mathbf{x}}_{\mathrm{i+1}}$. The geometric distance $\|\mathbf{x}_{\text{disc}} - \tilde{\mathbf{x}}_\mathrm{i}\|$ is bounded by the sampling interval.
    
    When the local feature space changes rapidly (high structural discrepancy), SPpruner calculates a smaller $\Delta$, forcing the sampling to be careful. This explicitly constrains the distance $\|\mathbf{x} - \tilde{\mathbf{x}}\|$ in complex regions. Conversely, in uniform background regions (low discrepancy), $\Delta$ increases, but since the feature variation $\|\mathbf{x} - \tilde{\mathbf{x}}\|$ is naturally low in uniform regions, the distance remains bounded.
\end{enumerate}

\noindent \textbf{Conclusion.}
By effectively adapting the sampling stride to the local structural context of the token distribution, SPpruner guarantees that for every $\mathbf{x} \in \mathbf{X}_{\mathrm{v}}$, there exists a proxy $\tilde{\mathbf{x}} \in \tilde{\mathbf{X}}_{\mathrm{v}}$ such that $\|\mathbf{x} - \tilde{\mathbf{x}}\| \leq \epsilon$, where $\epsilon$ is a threshold controlled by the target token budget. Substituting this into Eq. (\ref{eq:simplified_bound}), we obtain an upper bound, thereby proving that SPpruner can preserve the theoretical representational capability of the vanilla model. \hfill $\square$

\section{Detailed Experiment Settings}
\label{seq::detailed_experiment_settings}
In this section, we provided detailed information about the benchmarks and models used in our evaluation. Moreover, we have conducted more experiments to further validate SPpruner's performance.
\subsection{Evaluation Benchmarks}

To rigorously evaluate the efficacy and versatility of our proposed method, we conducted experiments on a comprehensive suite of benchmarks designed to assess the vision-language model's capabilities.

\noindent \textbf{Image Understanding Benchmarks.} We employed 13 widely recognized datasets to cover a broad spectrum of visual capabilities: MME~\citep{mme}, MMBench(EN, CN)~\citep{mmbench}, POPE~\citep{pope}, VizWiz~\citep{vizwiz}, GQA~\citep{gqa}, VQAv2~\citep{vqav2}, $\text{ScienceQA}^{\text{IMG}}$~\citep{scienceqa}, MMVet~\citep{mmvet}, MMStar~\citep{mmstar}, MMMU~\citep{mmmu}, MuirBench~\citep{muirbench} and RealWorldQA~\citep{realworldqa}.

\noindent \textbf{Chart and Document Understanding Benchmarks.} We utilized 3 challenging document understanding benchmarks to evaluate SPpruner's performance: TextVQA~\citep{textvqa}, DocVQA~\citep{docvqa}, OCRBench~\citep{ocrbench}. Also, we conduct ablation experiments on ChartQA~\citep{chartqa}.

\noindent \textbf{Image Captioning Benchmarks.} To further prove that SPpruner preserves the holistic image understanding, we conduct more experiments on Filckr30K~\citep{flickr30k}.

\noindent \textbf{Video Understanding Benchmarks.} To validate temporal reasoning capabilities, we evaluated our method on three standard video understanding benchmarks: VideoMME~\citep{videomme}, SeedBench~\citep{seedbench} and MLVU~\citep{mlvu}.

\subsubsection{Image Understanding}

\noindent \textbf{MME.} The MME benchmark is designed to rigorously evaluate a model’s perceptual and cognitive abilities through 14 subtasks. It employs carefully constructed instruction-answer pairs and concise instructions to minimize data leakage and ensure fair evaluation. This setup provides a robust measure of a model’s performance across various tasks.

\noindent \textbf{MMBench (EN, CN).} MMBench employs a hierarchical taxonomy to rigorously evaluate model performance across three granularity levels. Starting from broad perception and reasoning capabilities (L-1), the evaluation branches into six distinct sub-abilities (L-2) and ultimately expands into 20 fine-grained leaf dimensions (L-3). This coarse-to-fine structure facilitates a holistic diagnosis of multimodal proficiency. Additionally, MMBench-CN is its chinese version.

\noindent \textbf{POPE.} POPE serves as a specialized benchmark for hallucination evaluation. By posing binary inquiries regarding the presence of specific objects, it rigorously measures the model's visual faithfulness using accuracy, precision, recall, and F1 scores to detect false positive predictions.

\noindent \textbf{VizWiz.} VizWiz evaluates visual assistance capabilities using real-world images and questions collected from visually impaired users. The dataset comprises 20,523 training, 4,319 validation, and 8,000 test pairs, each with 10 human annotations. It challenges models to either provide accurate answers or detect unanswerable queries, emphasizing practical accessibility.

\noindent \textbf{GQA.} GQA integrates scene graphs, questions, and images, enriched with dense spatial and object-level attributes, to rigorously benchmark fine-grained visual reasoning and scene comprehension.

\noindent \textbf{VQAv2.} VQAv2 benchmarks open-ended visual reasoning across 265,016 real-world images, utilizing ten human-annotated answers per question to ensure robust evaluation accuracy.

\noindent \textbf{$\text{ScienceQA}^{\text{IMG}}$.} ScienceQA benchmarks multimodal reasoning across natural, language, and social sciences, organizing questions into a hierarchy of 26 topics, 127 categories, and 379 skills to test complex understanding. 

\noindent \textbf{MMVet.} MMvet assesses the integration of multimodal skills through 218 challenging samples across six core dimensions, including OCR and spatial awareness. It employs an LLM-based evaluator to standardize scoring for diverse response styles.

\noindent \textbf{MMStar.} MMStar is a curated, vision-essential benchmark consisting of 1,500 human-verified samples spanning six core capabilities and 18 fine-grained dimensions. To ensure rigorous evaluation, it strictly filters out samples solvable by text alone, thereby enforcing visual dependency to assess genuine multimodal reasoning.

\noindent \textbf{MMMU.} MMMU evaluates expert-level multimodal reasoning through 11.5K college-standard questions sourced from university exams and textbooks. Spanning six core disciplines with diverse visual inputs, it rigorously tests rigorous domain knowledge and complex reasoning.

\noindent \textbf{MuirBench.} MuirBench evaluates multi-image reasoning across 12 tasks and 10 distinct relationship types. It ensures rigorous assessment through 2,600 paired questions over 11,264 images, incorporating unanswerable perturbations to verify reliability.

\noindent \textbf{RealWorldQA.} RealWorldQA benchmarks physical and spatial reasoning capabilities, evaluating how effectively models ground their visual perception within realistic, everyday environments.

\subsubsection{Chart and Document Understanding}

\noindent \textbf{TextVQA.} TextVQA evaluates the model's capacity to comprehend scene text embedded in Open Images v3 samples~\citep{openimages}, such as signs and packaging. We utilize the validation split to test the necessary integration of OCR capabilities with visual reasoning.

\noindent \textbf{DocVQA.} DocVQA benchmarks document understanding using 50,000 questions across 12,767 diverse images, categorizing queries by reasoning type to enable granular performance analysis.

\noindent \textbf{OCRBench.} OCRBench provides a holistic assessment of OCR capabilities by aggregating 29 datasets, covering diverse tasks from basic text recognition and scene-centric VQA to key information extraction and handwritten mathematical analysis.

\noindent \textbf{ChartQA.} ChartQA evaluates complex reasoning and arithmetic skills over charts, utilizing 9.6K human-written and 23.1K machine-generated questions to test multi-step visual-logical inference. We conduct our evaluation using the standard test split.

\subsubsection{Image Captioning}

\noindent \textbf{Flickr30K.} Flickr30K comprises 31,783 images meticulously selected from the Flickr platform, with each image annotated by five independent English captions. It is designed to comprehensively evaluate models' image understanding capabilities as well as their ability to generate coherent and relevant image-to-text descriptions.

\subsubsection{Video Understanding}

\noindent \textbf{VideoMME.} VideoMME serves as a comprehensive benchmark for video MLLMs, comprising 900 manually annotated videos across 30 diverse subfields with durations ranging from 11 seconds to 1 hour. It utilizes 2,700 expert QA pairs to rigorously test sequential reasoning; notably, we exclude subtitles during our evaluation to strictly assess visual-temporal understanding.

\noindent \textbf{SeedBench.} SEEDBench evaluates generative understanding across 12 image and video dimensions through 19K human-annotated multiple-choice questions, enabling objective assessment without relying on subjective human or LLM scoring.

\noindent \textbf{MLVU.} MLVU serves as our primary testbed for long-form video comprehension, spanning durations up to 2 hours. It evaluates both holistic and fine-grained detail understanding across nine multiple-choice and generation tasks, reported via the M-Avg metric.

\subsection{Models}

We evaluate SPpruner using various open-source MLLMs. We validate our method on LLaVA family, LLaVA-1.5-7B~\citep{llava} and LLaVA-Next-7B~\citep{llava-next}, with the latter used to validate performance on high-resolution images. Moreover, to enhance the effectiveness of our proposed method, we have conducted extra experiments on advanced models with different sizes, including Qwen2.5-VL-7B and Qwen2.5-VL-32B~\citep{qwen2.5vl}. For video understanding tasks, we use LLaVA-OneVision~\citep{llavaonevision} as the baseline model. 

\subsection{Baselines}

\noindent \textbf{FastV}~\citep{fastv} targets redundancy in the shallow layers by utilizing attention maps to guide early-stage token pruning, thereby significantly mitigating computational overhead.

\noindent \textbf{PyramidDrop}~\citep{pdrop} employs a hierarchical pruning schedule across transformer layers, generating a pyramidal token distribution that optimizes the trade-off between computational efficiency and task performance.

\noindent \textbf{VisionZip}~\cite{visionzip} exploits the sparsity of visual attention by prioritizing dominant tokens and aggregating the remaining background into clustered contextual representatives, thereby maximizing visual information retention.

\noindent \textbf{Divprune}~\cite{divprune} prioritizes diversity and reformulates pruning as a Max-Min Diversity Problem (MMDP), selecting the subset that maximizes the minimum pairwise distance between retained tokens.

\noindent \textbf{DART}~\citep{dart} prioritizes redundancy reduction over importance ranking. It initializes a small set of pivot tokens and iteratively selects remaining tokens with the lowest similarity to the current set, ensuring a final output of maximally diverse visual features.

\noindent \textbf{PACT}~\citep{pact} first prunes redundant tokens based on textual cues, then employs a novel clustering strategy to aggregate tokens (within the preliminary retained set) that are closer to their centroids than a given threshold.

\subsection{Extra Experiment Results}
To further validate the efficacy of SPpruner, we conduct extensive experiments on image understanding and document understanding tasks using the Qwen2.5-VL-7B model. Table~\ref{tab:qwen2.5vl-7b} provides a comprehensive comparison of SPpruner against state-of-the-art token reduction methods across ten diverse benchmarks. Under a token budget of 33.3\%, SPpruner achieves the highest relative performance (93.4\%), effectively outperforming strong baselines like VisionZip (93.0\%) and FastV (93.1\%). Notably, SPpruner secures the best results on the challenging MMBench(EN) and MMBench(CN) datasets, demonstrating its ability to preserve critical visual information. Even under the aggressive 22.2\% pruning ratio, our method maintains remarkable robustness, achieving a top score of 80.0 on MMBench(EN) and consistently surpassing the DART baseline with an overall relative performance of 88.3\%. 

\begin{table*}[t]
\centering
\caption{Performance comparison on Qwen2.5-VL-7B under various token budgets.}
\label{tab:qwen2.5vl-7b}
\renewcommand{\arraystretch}{1.1}
\setlength{\tabcolsep}{3.5pt}
\resizebox{0.99\textwidth}{!}{
\begin{tabular}{l|cccccccccc|c}
\toprule
Methods
& MMB(EN) & MMB(CN) & MME & MMStar & POPE & MMMUVal
& MuirBench & RealWorld & TextVQA & OCRBench & Rel. \\
\midrule

\rowcolor{lightgray}
\multicolumn{12}{c}{\textit{Upper Bound: All Tokens (100\% Tokens)}} \\
\cmidrule(lr){1-12}

\textcolor{gray}{Vanilla}
& \textcolor{gray}{83.9} & \textcolor{gray}{80.8} & \textcolor{gray}{2329}
& \textcolor{gray}{62.9} & \textcolor{gray}{86.2} & \textcolor{gray}{49.4}
& \textcolor{gray}{43.5} & \textcolor{gray}{69.3} & \textcolor{gray}{77.6}
& \textcolor{gray}{845} & 100\% \\
\midrule

\rowcolor{lightgray}
\multicolumn{12}{c}{\textit{Retain 33.3\% Tokens} ($\downarrow$ 66.7\%)} \\
\midrule

FastV {\scriptsize (ECCV'24)}
& 80.1 & 76.5 & 2256 & 53.7 & 83.2 & 42.0 & 43.4 & 65.6 & 76.5 & \textbf{711} & 93.1\% \\
VisionZip {\scriptsize (CVPR'25)}
& 80.6 & 77.6 & \textbf{2270} & \textbf{57.2} & 83.2 & 41.0 & 41.5 & \textbf{66.8} & 73.8 & 700 & 93.0\% \\
DivPrune {\scriptsize (CVPR'25)}
& 80.7 & 77.4 & 2268 & 56.0 & 83.9 & 40.7 & 41.6 & 62.2 & 72.8 & 720 & 91.2\% \\
PyramidDrop {\scriptsize (CVPR'25)}
& 80.5 & 72.3 & 2246 & 56.7 & 83.6 & 41.1 & 41.8 & 65.8 & \textbf{74.2} & 612 & 90.2\% \\
PACT {\scriptsize (CVPR'25)}
& 80.6 & 77.0 & 2234 & 53.4 & 80.8 & 39.4 & 41.4 & 63.7 & 68.9 & 646 & 89.8\% \\
DART {\scriptsize (EMNLP'25)}
& 79.9 & 76.9 & 2259 & 54.7 & 83.9 & \textbf{42.1} & \textbf{42.1} & 65.7 & 72.2 & 668 & 91.1\% \\
\rowcolor{lightblue}
\textbf{SPpruner (Ours)}
& \textbf{81.1}  & \textbf{78.4}  & 2241 & 54.9 & \textbf{84.0} & 40.6
& 41.7 & 66.1 & 71.6 & 627 & \textbf{93.4\%} \\
\midrule

\rowcolor{lightgray}
\multicolumn{12}{c}{\textit{Retain 22.2\% Tokens} ($\downarrow$ 77.8\%)} \\
\midrule

FastV {\scriptsize (ECCV'24)}
& 78.8 & 75.8 & 2155 & 50.5 & 80.0 & \textbf{42.3} & 42.3 & 64.8 & \textbf{74.9} & 616 & 89.9\% \\
VisionZip {\scriptsize (CVPR'25)}
& 79.9 & 76.8 & 2219 & 55.1 & 83.3 & 41.7 & 41.3 & \textbf{66.5} & 71.2 & 620 & \textbf{91.0\%} \\
DivPrune {\scriptsize (CVPR'25)}
& 80.1 & 77.1 & 2204 & 54.2 & \textbf{83.9} & 40.6 & 41.8 & 62.6 & 71.8 & \textbf{658} & 89.8\% \\
PyramidDrop {\scriptsize (CVPR'25)}
& \textbf{80.2} & \textbf{77.8} & \textbf{2221} & \textbf{56.3} & 82.3 & 41.9 & 40.9 & 65.7 & 73.2 & 516 & 90.1\% \\
PACT {\scriptsize (CVPR'25)}
& 78.2 & 76.7 & 2173 & 48.4 & 73.3 & 37.0 & 40.1 & 54.4 & 62.7 & 535 & 83.4\% \\
DART {\scriptsize (EMNLP'25)}
& 78.2 & 75.5 & 2174 & 49.7 & 81.1 & 40.1 & 42.0 & 63.9 & 68.5 & 563 & 87.8\% \\
\rowcolor{lightblue}
\textbf{SPpruner (Ours)}
& 80.0 & 76.6 & 2182 & 50.7 & 81.6 & 39.4
& \textbf{42.9} & 62.2  & 68.7  & 570 & 88.3\textbf{\%} \\
\bottomrule
\end{tabular}}\vspace{-3ex}
\end{table*}

\section{Computational Complexity.}
Following the earlier works~\citep{fastv, vtw, divprune}, we report the computational requirements of key components, including the self-attention mechanism and the feed-forward network (FFN), for SPpruner and other baselines. Assuming a Transformer architecture with $\mathrm{T}$ layers and applying reduction at layer $\mathrm{R}$, we compute the computational cost by separately analyzing the components before and after pruning. Consequently, the total floating-point operations (FLOPs) required can be expressed as:
\begin{equation}
    \text{Total FLOPs}=\mathrm{R}\times(\mathrm{4nd^2 + 2n^2d+2ndm}) + (\mathrm{T-R})\times(\mathrm{4nd^2 + 2n^2d+2ndm}),
\end{equation}
Where $\mathrm{n}$, $\mathrm{d}$, and $\mathrm{m}$ denote the sequence length, hidden dimension, and FFN intermediate size, respectively. This equation quantifies the FLOPs reduction, degenerating to the baseline cost when $\mathrm{R=T}$. Our analysis reveals that delaying pruning to later layers offers diminishing efficiency gains. This validates our strategy of pruning at the second layer to maximize computational savings.

\section{Limitation and Future Work}
Although our method has achieved state-of-the-art performance across various benchmarks, we observe that its effectiveness is less pronounced on more advanced models such as Qwen2.5-VL compared to LLaVA. This may be attributed to the inherent token reduction mechanisms, such as patch merging, already present in these models. Investigating how to adapt to such sophisticated architectures constitutes a significant direction for our future work.

\section{Visualization}
In this section, we present visualizations of the analyzed paradigms. As shown in Figures~\ref{fig:llava_vertical_vis_part1}, ~\ref{fig:llava_vertical_vis_part2}, and ~\ref{fig:llava_vertical_vis_part3}, the reduction ratios increase from left to right, ranging from 35\% to 95\%. It is clearly observed that our proposed subject-centric reduction paradigm effectively captures the primary visual subjects of the image. Moreover, compared to the query-centric reduction paradigm, our method retains a denser set of subject tokens, which is crucial for preserving the model's holistic image understanding capability. Furthermore, as the reduction ratio increases, query-centric methods continue to suffer from token misalignment, as illustrated in the 95\% reduction scenario in Figure~\ref{fig:llava_vertical_vis_part3}. This observation further validates that our approach maintains superior visual subject recognition and performance stability even under extreme reduction ratios.

\begin{figure*}[ht]
\centering
\includegraphics[width=0.9\textwidth]{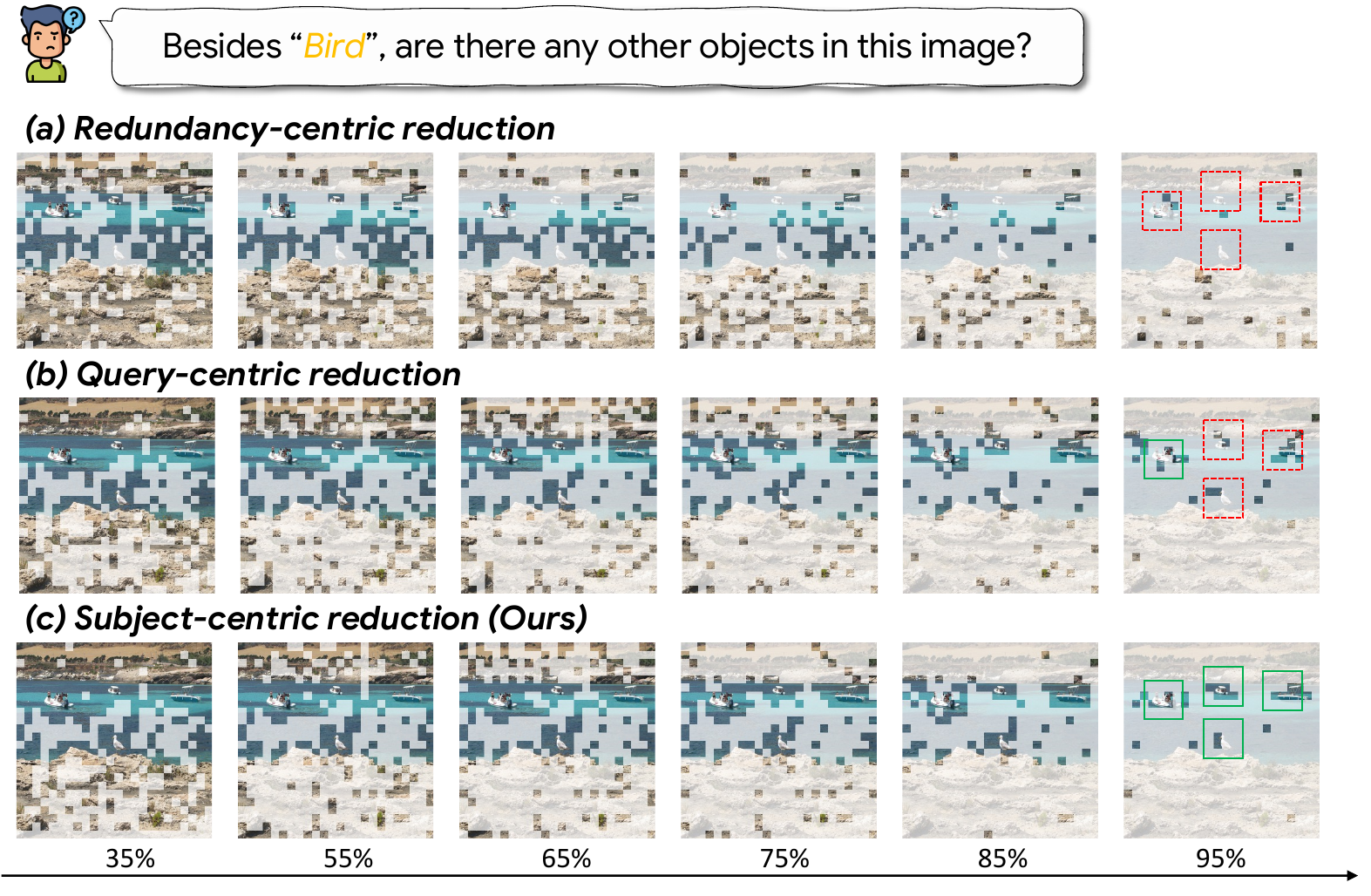}
\caption{V\textbf{Visualization of token retention across increasing reduction ratios}. Under a query inquiring about secondary objects (i.e., not the ``Bird``), SPpruner excels in capturing a broad visual subject spectrum. Unlike baselines that discard unqueried subjects, our method successfully retains salient objects (e.g., boat) even at extreme reduction ratios.}
\label{fig:llava_vertical_vis_part1}\vspace{-2ex}
\end{figure*}

\begin{figure*}[ht]

\centering
\includegraphics[width=0.9\textwidth]{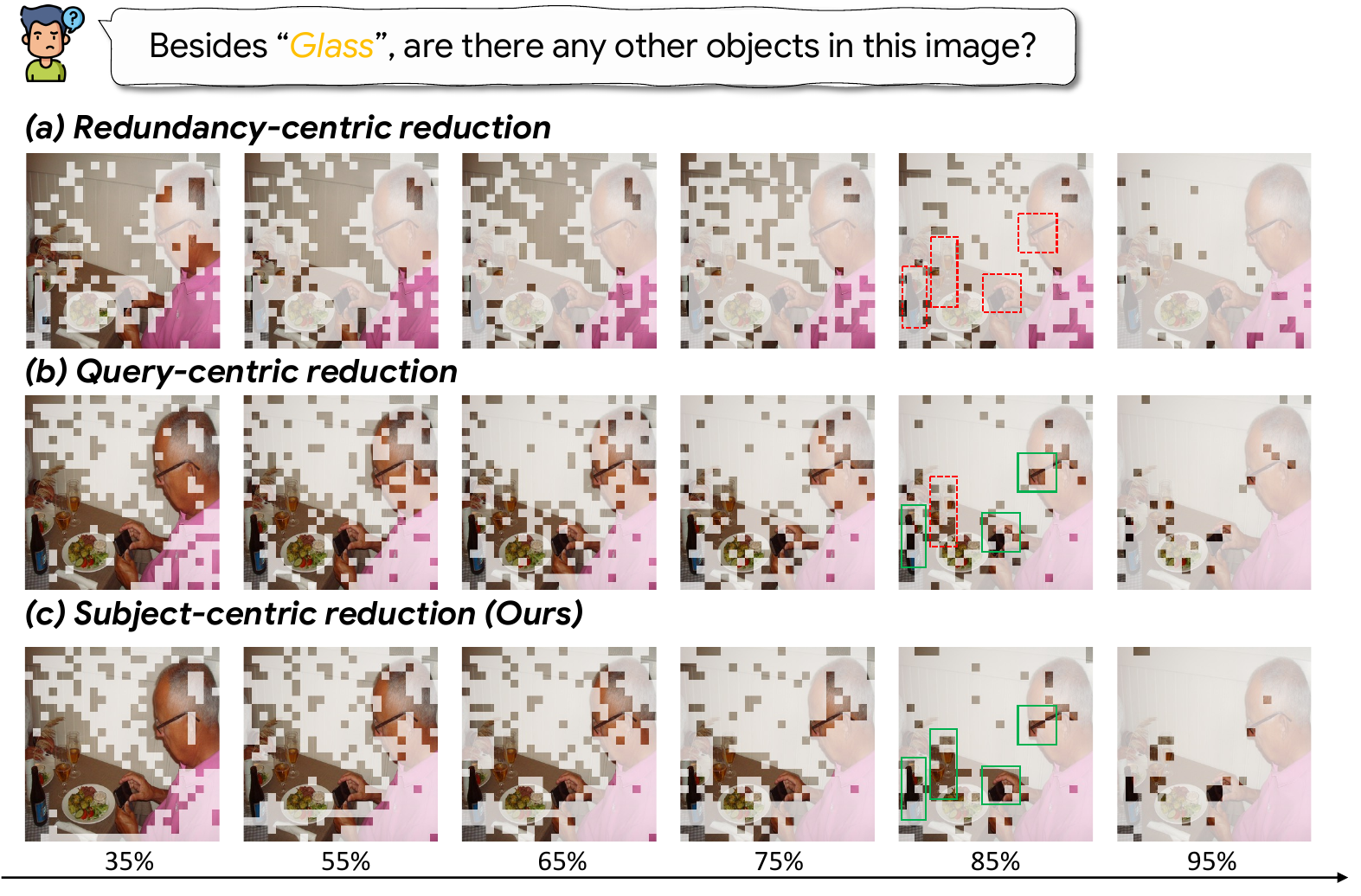}
\caption{\textbf{Visualization of token retention across increasing reduction ratios}. Under a query inquiring about secondary objects (i.e., not the ``Glass``), SPpruner excels in capturing a broad visual subjects spectrum. Unlike baselines that discard unqueried subjects, our method successfully retains salient objects (e.g., champagne) even at extreme reduction ratios.}
\label{fig:llava_vertical_vis_part2}\vspace{-2ex}
\end{figure*}

\begin{figure*}[ht]
\centering
\includegraphics[width=0.9\textwidth]{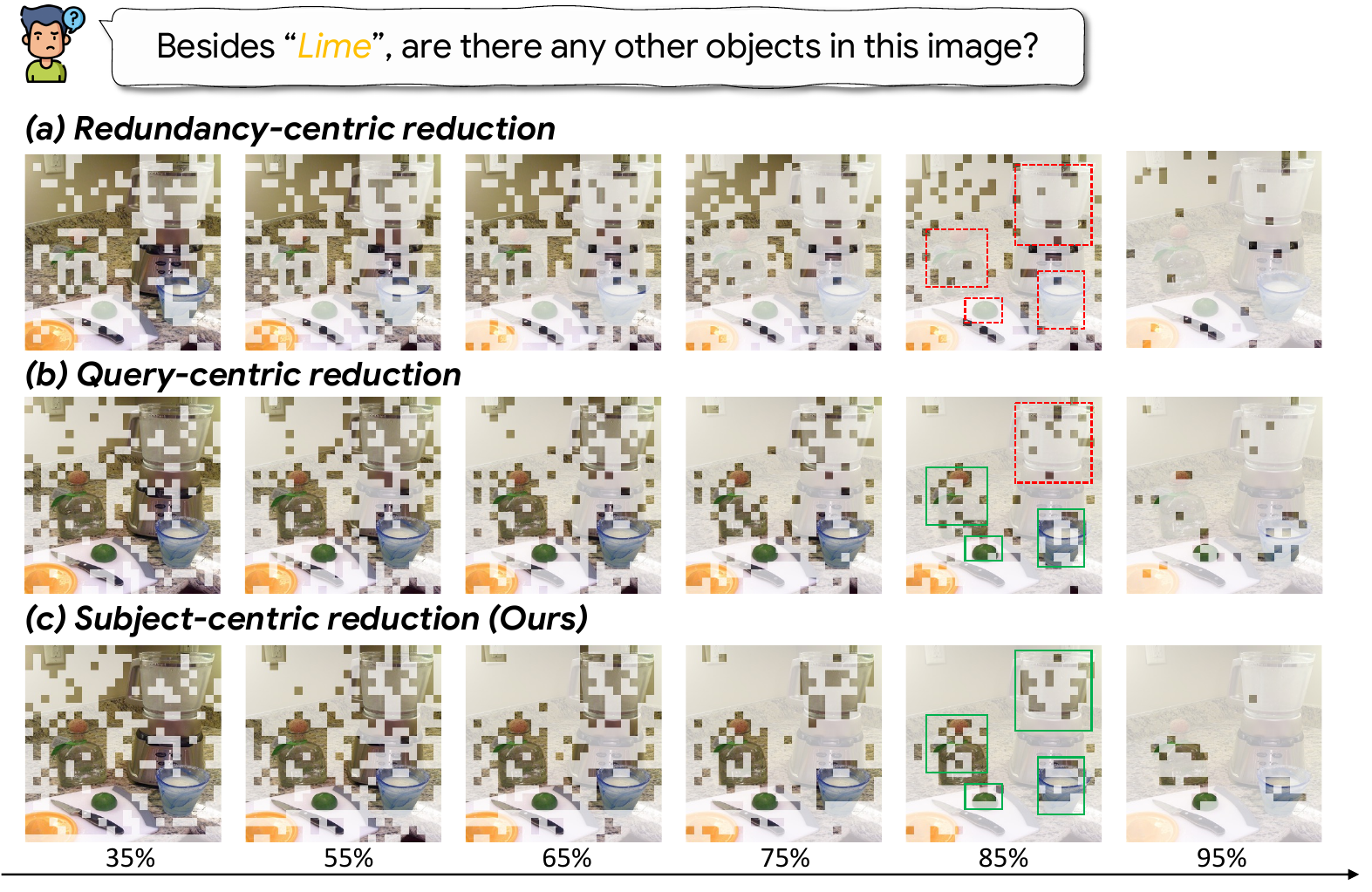}
\caption{\textbf{Visualization of token retention across increasing reduction ratios}. Under a query inquiring about secondary objects (i.e., not the ``Lime``), SPpruner excels in capturing a broad visual subjects spectrum. Unlike baselines that discard unqueried subjects, our method successfully retains salient objects (e.g., tequila) even at extreme reduction ratios.}
\label{fig:llava_vertical_vis_part3} \vspace{-2ex}
\end{figure*}



\end{document}